\newtheorem{theorem}{Theorem}[section]
\newtheorem{lemma}[theorem]{Lemma}
\newtheorem{definition}[theorem]{Definition}
\def\X{\mathcal{X}}
\def\Xij{\mathcal{X}_{I_j}(\boldsymbol x^{(i)})}
\def\E{\mathbb{E}}
\def\yi{\boldsymbol y^{(i)}}
\def\yit{\boldsymbol y^{(i)\top}}
\def\bounda{\frac{w_j|I_j|(d+2)^4(\beta+\alpha + 7\alpha^4\sigma^2)}{\mu^2N^2\delta}}
\def\boundb{\frac{w_j(d+2)^4(\beta+\alpha + 7\alpha^4\sigma^2)}{\mu N\delta}}
\def\boundc{mC\sqrt{2^5\frac{w_j(d+2)^4(\beta+\alpha + 7\alpha^4\sigma^2)}{\mu N\delta}}}
\title{Conditional Linear Regression for Heterogeneous Covariances}
\author[]{Brendan Juba}
\author[]{Leda Liang}
\affil[]{Washington University in St. Louis \\ 
         \{bjuba, ledaliang\}@wustl.edu}
\date{}
\begin{document}

\maketitle

\begin{abstract}
 Often machine learning and statistical models will attempt to describe the majority of the data. However, there may be situations where only a fraction of the data can be fit well by a linear regression model. Here, we are interested in a case where such inliers can be identified by a Disjunctive Normal Form (DNF) formula. We give a polynomial time algorithm for the conditional linear regression task, which identifies a DNF condition together with the linear predictor on the corresponding portion of the data. In this work, we improve on previous algorithms by removing a requirement that the covariances of the data satisfying each of the terms of the condition have to all be very similar in spectral norm to the covariance of the overall condition.
\end{abstract}

\section{\uppercase{Introduction}}

Linear regression is a technique frequently used in statistical and data analysis. The task for standard linear regression is to fit a linear relationship among variables in a data set. Often, the goal is to find the most parsimonious model that can describe the majority of the data. In this work, we consider the situation where only a small portion of the data can be accurately modeled using linear regression. More generally, in many kinds of real-world data, portions of the data of significant size can be predicted significantly more accurately than by the best linear model for the overall data distribution: \cite{rgh+15} showed that there are attributes that are significant risk factors for gastrointestinal cancer in certain subpopulations, but not in the overall population. \cite{hjlw19} demonstrated that a variety of standard (real-world) regression benchmarks have portions that are fit significantly better by a different linear model than the best model for the overall data set; \cite{CJLLR20} presented further, similar findings. We will consider cases where linear regression fits well when the data set is conditioned on a simple condition, which is unknown to us. We study the task of finding such a linear model, together with a formula on the data attributes describing the condition, i.e., the portion of the data for which the linear model is accurate.

This problem was introduced by \cite{J17}, who gave an algorithm for conditional sparse linear regression, using the maximum residual as the objective. This was extended to the usual squared-error loss (as well as other $\ell_p$ losses) by \cite{hjlw19}. \cite{J17} also gave an algorithm for the general (non sparse) case that could only find a small fraction of the largest such condition. All of these algorithms find conditions describing subpopulations that are a union of some basic subsets of data, selected by ``terms.'' For example, simple families of terms may be obtained by considering the data for which a small set of categorical attributes take some specific values, or based on whether the value of some real attributes lie in specific quantiles of the distribution. \cite{CJLLR20} gave an algorithm for non sparse linear regression that matches the size of the largest condition, but only under a new assumption, that the covariances of the data satisfying each of the terms of the condition have to all be very similar in spectral norm to the covariance of the overall condition. 

Uniform covariances across terms is an extremely restrictive assumption: it means that essentially the only difference between populations selected by the various terms may be in their means. Note that the problem presupposes that there is significant heterogeneity in the conditional covariances across the distribution overall, or else the same linear model would be equally accurate across the various subsets; concretely, the risk factors found by \cite{rgh+15} are a kind of correlation between a factor and the target variable that exists in the identified subpopulation, but not in the larger population. In a real-world data set, there is no reason to expect that the only relationships that exist would involve the target attribute; relationships between other pairs of attributes may appear when we consider one term or another. For example, intuitively, if the data lies on a curved manifold, then conditioning on some attribute taking values that select one portion of the curve or another would alter such relationships since the tangent space changes, and the covariance matrix in any local region of the manifold only has eigenvectors lying in the tangent space. (Note that we only require a common component in the orthogonal subspace for a linear model to exist.)
So, Calderon et al's algorithm can only be guaranteed to find highly homogeneous subsets of a distribution that features significant heterogeneity overall. In their work, Calderon et al.\ concluded with the question of whether or not this new property was necessary to obtain a computationally efficient algorithm.

\subsection{Our Contribution}

In this work, we answer the questions posed by Calderon et al.\ and \cite{J17}, solving the form of the task originally sought: we give a polynomial-time algorithm that identifies a condition that covers as much of the distribution as the optimal condition and a linear model which provides a good fit when conditioned on said condition, even if the terms of the condition feature heterogeneous covariances. The only assumptions on the data we use are bounds on the moments of the data itself (including hypercontractivity) and generalizations of the standard Gaussian noise assumption on the subset of the data described by the unknown condition. Note that in regression, the error can be arbitrarily large with arbitrary probability, so bounds on the moments of the data are necessary to empirically estimate the error.

Our algorithm is inspired by the list-decodable subspace recovery algorithm presented by \cite{BK21}. Their work uses the sum of squares method to construct an algorithm which addresses robust subspace recovery. We present an analogous algorithm for conditional linear regression.  As in \cite{CJLLR20}, this is done by using a collection of subsets, which we will call ``terms,'' in place of individual points. We thus make use of the fact that by drawing many examples per term, the noise in the data selected by a term can be better controlled, leading to more accurate estimates. We stress that in contrast to the guarantee that Bakshi and Kothari obtain for their problem, we can obtain arbitrary accuracy with an algorithm that runs in fixed polynomial time, with an exponent that does not depend on the desired accuracy; we only require a sufficient (polynomial) number of examples from the target distribution, and our running time has only a low-order polynomial dependence on the size of the data set. (The dependence on the dimension, by contrast, while fixed, is a higher degree polynomial due to our use of the sum-of-squares method; reducing this dependence is a key challenge for future work, see Section~\ref{conclusions}.)

\subsection{Related Work}

Our work is both technically and conceptually related to ``list-decodable'' linear regression. Classical work in robust statistics~\citep{huber81,rl87} considers situations where a minority subset of the data consists of ``outliers'' that should be ignored. In this classical setting, it did not make sense to consider the possibility that a majority of the data could be ``outliers,'' in part because there would no longer be a unique, dominant solution to consider. But, a recently proposed model of ``list-decodable'' robust statistics~\citep{CSV17} (similarly to classical algorithms such as RANSAC~\citep{fb81} for subspace discovery) overcomes this obstacle by permitting a list of possible estimates or models to be produced, provided that the list is not too long (generally, $O(1/\mu)$ estimators for a $\mu$-fraction of the data) and that an accurate estimate appears somewhere in the list. In particular, algorithms for list-decodable linear regression have been proposed~\citep{RY20,kkk19} (see also \cite{BK21}). Although we have formulated our problem in such a way that we only produce a single arbitrary model as output, we could have returned a list of models as well (or vice-versa, select a suitable model from such a list). The main distinction is that in this line of work, on the one hand, one does not need to produce a DNF that identifies the inliers, in contrast to our setting. Note that without this formula, we cannot tell when we should use one of the models versus another to make predictions for new data. Of course, on the other hand, in these works one is not promised that such a DNF exists, either, and so the approach used in our analysis cannot be used in these problems.

Another similar line of work to conditional linear regression is \emph{selective regression}~\citep{eyw12}: here as well, the objective is to identify a fraction of the data that can be fit well. But in contrast to our setting, here a linear predictor is first identified, and then a data set is given, and finally a \emph{ranking} of that data is produced. The interpretation is that the top $\mu$-fraction of the ranking comprise the data for which the predictor is expected to be most accurate. In contrast, we jointly produce a linear model and a DNF that identifies which further examples, drawn from the same distribution, will be accurately predicted by the model. In \emph{learning with rejection} or \emph{abstention}~\citep{cdsm16}, on the other hand, a formula that selects a subset of the data is identified, but the problem formulation assigns a penalty to each example that is ``rejected'' -- thus, we have a default loss value that our classifier can take in place of the loss that would be incurred by this prediction, and this overall loss is minimized over the entire data set. The cost of rejection here takes the place of the probability of the subset $\mu$ in our problem.

All of these works have some similarities to classical topics such as fitting mixture models~\citep{mccs01,jiang07}. The primary difference is that in such work, first, \emph{every} data point should have been drawn from some linear model in the mixture; if some large subset of the data cannot be fit well by linear models, there is no guarantee that the model will identify a small subset that can be fit well. A second difference is that such models do not provide a (DNF) rule to decide whether or not subsequent data is drawn from one of the components versus another. There are a number of topics such as regression trees~\citep{quinlan92}, cluster-wise regression~\citep{PJKW17}, etc.\ that do provide such rules, but again, if the data overall cannot be fit well, they do not guarantee that small subsets of the data that can be fit well will be found.

\section{\uppercase{Preliminaries}}\label{preliminaries}

We will assume that we have a data set containing $N$ samples, from a distribution $\mathcal D$ over $\{0, 1\}^n \times \mathbb{R}^d \times \mathbb{R}$. Each sample consists of an $n$ dimensional vector of Boolean attributes $\boldsymbol{x}$, a $d$ dimensional real valued vector of predictor variables $\boldsymbol{y}$, and a real valued response, $z$, which we would like to predict. We will denote the $i$th sample as $(\boldsymbol{x}, \boldsymbol{y}, z)^{(i)}$ and abbreviate it as $\boldsymbol{x}^{(i)}$ when there is no ambiguity. 

For linear regression, we want to find a a vector of coefficients $\boldsymbol v$ such that $z$ can be predicted by $\langle \boldsymbol v, \boldsymbol y \rangle$. Typically, $\boldsymbol v$ is found using ordinary least squares, which minimizes the sum of $(\langle \boldsymbol v, \boldsymbol y \rangle - z )^2$ over all data points. However, since we are interested in cases where the majority of data cannot be fit, we want to find a subset of points described by condition $\boldsymbol c$ where there exists a good linear model.  Similar to previous work, we will consider conditions represented by Disjunctive Normal Form (DNF) formulas; other natural families of formulas are either weaker or yield intractable problems~\citep{J17}. A \emph{$k$-DNF} is defined to be a disjunction (OR) of \emph{terms} where each term is a conjunction (AND) of no more than $k$ attributes.

Throughout this paper, we will use $\| \cdot \|_F$ to denote the Frobenius norm and $\| \cdot \|_2$ to denote the $L2$ norm of a vector. We will also define $\mathcal X_I$ as the characteristic function where $\mathcal X_I(x) = 1$ if $x \in I$ and 0 otherwise. For brevity, we will use $[N] = \{n \in \mathbb N | 1 \leq n \leq N\}$. For a matrix $M$, $M\succeq 0$ denotes $M$ is positive semidefinite. Finally, we will use $\Pi$ to denote projection matrices.

\begin{definition}[Conditional Linear Regression]
Given a sample of $N$ points, $(\boldsymbol x, \boldsymbol y, z)^{(i)}$, from a distribution $\mathcal D$ over $\{0, 1\}^n \times \mathbb R^d \times \mathbb R$, the task of conditional linear regression is to find a $k$-DNF, $\boldsymbol c$, and linear predictor, $\boldsymbol v = (v_1 \dots v_d)^T$, such that, with high probability, $|\langle \boldsymbol v, \boldsymbol y\rangle - z|$ is bounded by $\epsilon$ when conditioned on $\boldsymbol c(\boldsymbol x) = 1$ and $\boldsymbol c(\boldsymbol x)=1$ is satisfied by at least a $\mu$ fraction of the data.
\end{definition}

We will present an algorithm that finds $\boldsymbol c$ and $\boldsymbol v$ that is close to the optimal values of $\boldsymbol c^\ast$ and $\boldsymbol v^\ast$ given that the distribution of samples conditioned on $\boldsymbol c^\ast$ follows certain regularity conditions. Our algorithm is obtained by solving a \emph{sum-of-squares relaxation} \citep{parrilo00,lasserre01,nesterov00,shor87} of a polynomial optimization problem:

\begin{definition}[Sum of squares relaxation]
Given a system of polynomial inequalities for polynomials in $\mathbb R[x_1,\ldots,x_n]$, $g_1(\boldsymbol{x})\geq 0,\ldots,g_r(\boldsymbol{x})\geq
0$, $h_1(\boldsymbol{x})=0,\ldots,h_s(\boldsymbol{x})=0$, the \emph{degree-$\ell$ sum-of-squares relaxation} is the following semidefinite optimization problem. 

The set of program variables $\boldsymbol{u}$ is indexed by monomials over $x_1,\ldots,x_n$ of total degree at most $\ell$, with $u_{\boldsymbol\alpha}$ denoting the variable for the monomial with degree vector $\boldsymbol\alpha\in\mathbb N^n$. 

We define the {\em degree-$\ell$ moment 
matrix} $M_\ell(\boldsymbol{u})$ indexed by $\boldsymbol\alpha,\boldsymbol\beta\in\mathbb N^n$ with total degree at most $\ell/2$ to be $M_\ell(\boldsymbol u)_{(\boldsymbol\alpha,
\boldsymbol\beta)}=u_{\boldsymbol\alpha+\boldsymbol\beta}$. 

For a ``shift'' polynomial $p\in\mathbb R[x_1,\ldots,
x_n]$ of degree $t$, letting $p_{\boldsymbol\gamma}$ denote the coefficient of the monomial $\boldsymbol{x}^{\boldsymbol\gamma}$ in $p$, the {\em degree-$\ell$ localizing matrix} $M_\ell (p\boldsymbol{u})$ is defined by $M_\ell (p\boldsymbol{u})_{(\boldsymbol\alpha,\boldsymbol\beta)}=\sum_{\boldsymbol\gamma} p_{\boldsymbol\gamma} u_{\boldsymbol\alpha+\boldsymbol\beta+\boldsymbol\gamma}$ for $\boldsymbol\alpha$ and $\boldsymbol\beta$ of total degree at most $\ell/2-t$. 

Now, the program has the constraints that $u_{\boldsymbol{0}}=1$; $M_\ell(\boldsymbol u)\succeq 0$; $M_\ell(h_j\boldsymbol{u})=0$ for $j\in\{1,\ldots,s\}$; and $M_\ell(g_j \boldsymbol{u})\succeq 0$ for 
$j\in\{1,\ldots,r\}$.
\end{definition}

Given a bound on the magnitudes of the values involved, the solutions to semidefinite programs can be approximated to arbitrary precision in polynomial time by various algorithms; the current state-of-the-art is due to \cite{jkl+20}.

A helpful interpretation of the sum-of-squares relaxation is that it defines a set of ``pseudo-distributions'' that relax the moments of probability distributions supported on solutions to the system of inequalities, with an associated ``pseudo-expectation'' operator defined on polynomials of degree up to $\ell$ (we borrow the presentation from \cite{RY20}):

\begin{definition}[Pseudo-distribution \citep{bbh+12}]
A level $\ell$ pseudo-distribution is a finitely-supported function $D: \mathbb R^n \rightarrow \mathbb R$ such that $\sum_x D(x) = 1$ and $\sum_x D(x) f(x)^2 \geq 0$ for every polynomial $f$ of degree at most $\ell/2$.
\end{definition}

\begin{definition}[Pseudo-expectation \citep{bbh+12}]
The pseudo-expectation of a function $f$ on $\mathbb R^d$ with respect to a pseudo-distribution $D$, denoted by $\tilde {\mathbb E}_{D(x)} = \sum_x D(x) f(x)$.
\end{definition}

A low-degree pseudo-distribution is generally \emph{not} a probability distribution, and we generally cannot sample from it.
The quality of a sum-of-squares relaxation is characterized by ``sum-of-squares \emph{proofs}''---under mild conditions, the bounds obtained by the relaxation of a given degree match the optimal bound that can be proved via a sum-of-squares proof \citep{parrilo00,lasserre01,nesterov00,shor87}:

\begin{definition}[Sum of Squares proofs \citep{gv02}]
Fix a set of polynomial inequalities $\mathcal A = \{g_i(x) \geq 0\}_{i \in [m]}\cup \{h_i(x) = 0\}_{i\in [m']}$ in variables $x_1, \dots, x_n$. A sum-of-squares proof of $q(x) \geq 0$ is an identity of the form

\begin{equation*}
\begin{split}
        & \left(1 + \sum_{k \in [m'']} d_k^2(x) \right) \cdot q(x) 
 = \sum_{j \in [m''']} s_j^2(x) + \sum_{i \in [m]} a_i^2(x) \cdot g_i(x) + \sum_{i\in [m']} b_i(x) h_i(x),
\end{split}
\end{equation*}
where $\{s_j(x)\}_{j \in [m''']}$, $\{a_i(x)\}_{i \in [m]}$, $\{b_i(x)\}_{i\in [m']}$, and $\{d_k(x)\}_{k \in [m'']}$ are real polynomials.
\end{definition}

Moreover, a recent technique for extracting solutions from the sum-of-squares relaxation has been to use \emph{identifiability} \citep{rss18}: roughly, if there is a sum-of-squares proof that a portion of the solution is determined up to small $\ell_2$ error, then we can read that portion of the solution off directly from the degree-1 variables. Conceptually, our analysis will follow this approach. Towards such an analysis, we require that the data/noise possess some niceness properties in a form that is amenable to sum-of-squares:

\begin{definition}[Certifiable Hypercontractivity, Section 2.1 of \cite{BK21}]
A distribution $\mathcal D$ over $\mathbb R^d$ has $C$-hypercontractive degree-2 polynomials if for every $d \times d$ matrix $Q$,
$$\mathbb E_{x \sim \mathcal D} \left(\boldsymbol x^\top Q \boldsymbol x - \text{tr}(Q)\right)^{2h} \leq (Ch)^h\left(\mathbb E_{\boldsymbol x \sim \mathcal D}\left(\boldsymbol x^\top Q \boldsymbol x\right)^2\right)^h.$$
We say that the hypercontractivity is \emph{$\ell$-certifiable} if there is a degree-$\ell$ sum-of-squares proof of this inequality with $Q$ as an indeterminate.
\end{definition}

\begin{lemma}[Certifiable Hypercontractivity Under Sampling. Lemma 6.11 in \cite{BK21}] \label{hypercontractivesampling}
Let $\mathcal D$ be a 1-subgaussian, $2h$-certifiably $C$-hypercontractive distribution over $\mathbb R^d$. Let $\mathcal S$ be a set of $n = \Omega((hd)^{8h})$ i.i.d. samples from $\mathcal D$. Then, with probability at least $1-1/\text{poly}(n)$, the uniform distribution on $\mathcal S$ is $h$-certifiably $(2C)$-hypercontractive.
\end{lemma}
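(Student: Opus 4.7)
The strategy is to upgrade the population-level sum-of-squares proof of hypercontractivity to a sample-level one, absorbing the sampling error into the slack between $C$ and $2C$. Write $A(Q) := \mathbb{E}_{\mathcal D}[(\boldsymbol x^\top Q \boldsymbol x - \mathrm{tr}(Q))^{2h}]$ and $B(Q) := \mathbb{E}_{\mathcal D}[(\boldsymbol x^\top Q \boldsymbol x)^2]$, with $\hat A, \hat B$ the analogous averages over the sample $\mathcal S$. The hypothesis supplies an SoS identity $(Ch)^h B(Q)^h - A(Q) = \Sigma(Q)$, where $\Sigma$ is a sum of squares in the indeterminate $Q$; the goal is the same identity with $\hat A, \hat B$ in place of $A,B$ and constant $(2Ch)^h$.

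The plan is to decompose
\[
(2Ch)^h \hat B(Q)^h - \hat A(Q) = \bigl[(Ch)^h B(Q)^h - A(Q)\bigr] + (Ch)^h\bigl[2^h \hat B(Q)^h - B(Q)^h\bigr] + \bigl[A(Q) - \hat A(Q)\bigr]
\]
and prove each of the last two brackets is itself SoS in $Q$. The first bracket is SoS by assumption. For the second, it suffices to establish the SoS inequality $2\hat B(Q) \succeq B(Q)$, since raising a nonnegative SoS inequality to the $h$-th power preserves SoS (via the usual telescoping identity $a^h - b^h = (a-b)\sum_{j<h} a^j b^{h-1-j}$ and the fact that products of SoS polynomials are SoS). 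For the third, I need an SoS proof that $|A(Q) - \hat A(Q)|$ is dominated by a small multiple of $(Ch)^h B(Q)^h$; equivalently, that $\hat A(Q) \le A(Q) + \tfrac12(Ch)^h B(Q)^h$ in the SoS sense, which when combined with the first bracket yields the conclusion.

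Both required multiplicative concentration statements are amenable to SoS arguments via moment methods. Applying certifiable hypercontractivity of $\mathcal D$ to the degree-$4$ polynomial $\hat B(Q) - B(Q)$ (viewing the $n$-fold empirical average as a sum of i.i.d.\ polynomials in $Q$) yields an SoS bound on its $2h$-th moment of the form $(Ch)^h B(Q)^{2h} / n^h$, and a Chebyshev-style tail argument combined with an $\varepsilon$-net over the unit sphere of matrices $Q$ (of dimension $d^2$) converts this into a uniform-in-$Q$ SoS inequality, once $n \gtrsim (hd)^{8h}$. The analogous argument applied to the degree-$4h$ polynomial $\hat A(Q) - A(Q)$, using hypercontractivity again to bound its $2h$-th moment by $(Ch)^h A(Q)^{2h}/n^h \le (Ch)^{3h} B(Q)^{2h^2}/n^h$, gives the second bound (the $1$-subgaussian assumption is used here to truncate samples so that the net argument is quantitatively effective and the polynomial identities compose without blowup).

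The main obstacle is that ordinary concentration tools such as Bernstein or Chernoff are not expressible as sum-of-squares identities, so the whole argument must be run through moment inequalities (Chebyshev/Markov with high moments), which is exactly what certifiable hypercontractivity was designed to enable. The $8h$ exponent in the sample complexity reflects the combination of a degree-$4h$ polynomial concentration (giving a $4h$ factor) with the need for enough moments to union-bound over a net of size $d^{O(h)}$ covering the matrices $Q$. Once this SoS-friendly concentration machinery is in place, the remaining algebra is a straightforward telescoping of the three brackets above.
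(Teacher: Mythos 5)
The paper does not actually prove this lemma: it is imported verbatim as Lemma 6.11 of \cite{BK21}, and both occurrences in the paper (in the preliminaries and in the appendix) cite it without argument. So there is no in-paper proof to compare against; I will assess your proposal against what an argument of this form needs to accomplish.

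Your three-bracket decomposition
$(2Ch)^h\hat B^h-\hat A=\bigl[(Ch)^hB^h-A\bigr]+(Ch)^h\bigl[2^h\hat B^h-B^h\bigr]+\bigl[A-\hat A\bigr]$
is algebraically correct, and the high-level plan --- keep the population SoS certificate, then absorb the empirical--population error into the factor-of-$2$ slack --- is the right shape of argument. The telescoping observation for the second bracket is also fine: $\hat B(Q)$ and $B(Q)$ are literally sums/expectations of squares, so the factor $\sum_j(2\hat B)^jB^{h-1-j}$ is manifestly SoS, and the telescoping identity preserves SoS-ness modulo a degree blowup that you should track.

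The genuine gap is in how you propose to \emph{certify} the concentration steps. An $\varepsilon$-net over the unit sphere of matrices $Q$ establishes the scalar inequality $2\hat B(Q)\ge B(Q)$ for all $Q$, but ``$P(Q)\ge0$ for all $Q$'' is not the same as ``$P$ has an SoS representation,'' and a net argument produces no certificate. For the $B$-bracket you are saved by an accident of degree: $2\hat B(Q)-B(Q)$ is a quadratic form in the flattened $Q\in\mathbb R^{d^2}$, so nonnegativity is equivalent to PSD-ness of the associated $d^2\times d^2$ matrix of fourth moments, which is an SoS certificate; the correct tool there is matrix concentration (e.g.\ truncation via the $1$-subgaussian assumption plus Matrix Bernstein) applied to the empirical fourth-moment tensor, not an $\varepsilon$-net plus Chebyshev. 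For the $A$-bracket, however, $\hat A-A$ is a degree-$2h$ polynomial in $Q$, where nonnegativity and SoS genuinely diverge, so the same device is unavailable and you give no mechanism to produce a certificate. Compounding this, you invoke certifiable hypercontractivity on the ``degree-$4h$ polynomial $\hat A(Q)-A(Q)$,'' but the hypothesis only covers degree-$2$ polynomials of $\boldsymbol x$; it does not supply moment bounds for $(\boldsymbol x^\top Q\boldsymbol x-\mathrm{tr} Q)^{2h}$ itself. The proof in \cite{BK21} handles this by arguing directly at the level of the (higher-order) empirical moment tensors appearing in the Gram/localizing matrices of the population SoS proof and showing these concentrate in a norm strong enough that the PSD decomposition survives the perturbation, rather than by trying to exhibit a fresh SoS certificate for the fluctuation polynomial itself. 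You would need to supply something of this kind --- a route from concentration of the relevant moment tensors to preservation of the SoS Gram decomposition --- before the third bracket can be closed.
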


In linear regression, it is standard and convenient to assume Gaussian residuals with mean 0 and constant variance $\sigma^2$. 
We stress that in order to empirically estimate the squared error of candidate linear models, we need some control on the moments of the loss function, which for various candidates corresponds to various quadratic polynomials; hypercontractivity of such quadratic polynomials gives such bounds.
Since many distributions, including uniform distributions, Gaussians, products of subgaussian distributions etc., are certifiably hypercontractive 
\citep{BK21}, it is reasonable to assume that we have samples from a certifiably hypercontractive distribution.
Lemma \ref{hypercontractivesampling} shows that empirical distributions from a certifiably hypercontractive distribution are also certifiably hypercontractive \citep{BK21}. We also assume that degree-2 polynomials similarly have \emph{certifiably} bounded variances, which also holds for such distributions \citep{BK21}.

\section{\uppercase{Results}}\label{results}

\subsection{Preprocessing}

For our algorithm, we will consider the data as $m$ disjoint subsets which we will call terms, $\{I_j\}_{j=1}^m$. To find $k$-DNF conditions, we will create a term for each setting of each set of $k$ distinct attributes, but the algorithm can be used with any family of sets we choose. We will weight each term by the number of points, $|I_j|$, in term $I_j$. We will define $I_{good}$ as the collection terms of the optimal $k$-DNF $\boldsymbol c^\ast$. From the perspective of the subspace recovery task \citep{BK21}, $I_{good}$ represents the collection of inliers, and the remaining points represent the outliers. Since the condition $\boldsymbol c^\ast$ is satisfied by a $\mu$ fraction of the data, $|I_{good}| = \mu N$. Additionally, our algorithm will assume that each term $I_j$ is pairwise disjoint. This can be achieved by duplicating points that satisfy more than one term \citep{CJLLR20}. We have $N$ data points and $m$ terms, so there will be at most $mN$ data points following the duplication procedure. Previously, $|I_{good}|=|\bigcup_{I_j \in I_{good}} I_j|$ which increases to $|I_{good}| = \sum_{I_j \in I_{good}} | I_j|$ with duplicate points. Therefore size of $I_{good}$ may blow up at most by a factor of the number of terms in $I_{good}$. If we use $N'$ to denote the number of points after duplication, notice that $N'_{good}/N' \geq N_{good}/mN$. Thus the proportion of good points after duplication is at least a $\mu/m$ \citep{CJLLR20}. Hereafter, $N$, $I_{good}$, $\{I_j\}_{j=1}^m$, and $\mu$ will be used to describe the data after duplication.

Our algorithm only obtains good estimates when each term in $I_{good}$ is large. Since the contribution of each term is weighted by its size, we can remove sufficiently small terms without compromising the quality of the estimates.
Finally, we will extend $\boldsymbol y^{(i)}$ by a constant, 1, to allow for an intercept in the linear model. 

\subsection{Main algorithm: identifying the data subspace}

Recall that our goal is to find the linear predictor $\boldsymbol v$ such that $\left\langle \boldsymbol v, \boldsymbol y^{(i)}\right\rangle = z^{(i)}$ for all points satisfying condition $\boldsymbol c^\ast$. If we extend $\boldsymbol v$ with -1 and $\boldsymbol y^{(i)}$ with $z^{(i)}$, the previous equation is equivalent to $\left\langle \boldsymbol v, \boldsymbol y^{(i)}\right\rangle = 0$. This equation describes a hyperplane, which allows us to look at our conditional linear regression problem from the perspective of a subspace recovery problem. Note that both $\boldsymbol v$ and $\boldsymbol y^{(i)}$ are now $(d+2)$-dimensional vectors. Let us use $\Pi$ to denote a projection matrix that projects onto this subspace and $\Pi_\ast$ to denote the optimal projection.

\begin{definition}[Reformulation of the Problem]
Given a distribution, $\mathcal D$, over points $\{\boldsymbol x^{(i)}, \boldsymbol y^{(i)}, z^{(i)} \}_{i=1}^N$ and predefined disjoint subsets, $\{I_j\}_{j=1}^m$. Let $I_{good}$ be an unknown collection of terms containing points that satisfy $\boldsymbol c^\ast$ such that $P(\boldsymbol x \in I_{good} \geq \mu)$. If there exists a linear predictor, $\boldsymbol v^\ast$, such that $|\langle \boldsymbol v^\ast, \boldsymbol y \rangle| \leq \epsilon$ and a projection matrix, $\Pi_\ast$, that projects onto the hyperplane described by the linear predictor, then we want to find $\hat \Pi$ that approximates $\Pi_\ast$ such that $\|\hat\Pi - \Pi_\ast \|_F \leq error$.
\end{definition}

Note that the Frobenius norm bounds the spectral norm from above, where the squared error of $v$ is precisely $v^\top\Pi_*v$, so a solution to this reformulation gives an empirical estimate that is off by at most $\|v\|_2^2\cdot error$. 
The advantage of this reformulation is that the subspace is uniquely determined by the set of terms, whereas if the subspace has lower dimension, there may be a large space of linear models, which are thus not identifiable. We remark that \cite{kkk19,RY20} made a stronger assumption to obtain identifiability for regression, that the data was \emph{anti-concentrated}. Anticoncentration does not hold if, e.g., the data lies on a lower-dimensional manifold, so it is a restrictive assumption. By contrast, we only need hypercontractivity and bounded moments to estimate the subspace.
We now state our main theorem, which is adapted from Theorem 1.4 of \cite{BK21}.

\begin{theorem} \label{mainthm}
Let $\Pi_\ast$ be a projection matrix for a subspace of dimension $r$. Let $\mathcal D|\boldsymbol{c^*}$ be a distribution with mean 0, covariance $\Pi_\ast$, and with $2$-certifiably $C$-hypercontracitve degree-2 polynomials with certifiably $C$-bounded variance. Then, there exists an algorithm that takes $n \geq \Omega\left((d \log d/\mu )^{16} \right)$ samples from the distribution conditioned on condition $\boldsymbol c^\ast$ and outputs a list $\mathcal L$ of $\mathcal O(1/\mu)$ projection matrices such that with probability at least 0.99 over the draw of the sample and randomness of the algorithm, there is a $\hat \Pi \in \mathcal L$ satisfying $\|\hat \Pi - \Pi_\ast \|_F \leq \mathcal O(1/\mu)$ in polynomial time.
\end{theorem}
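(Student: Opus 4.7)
My plan is to follow the sum-of-squares blueprint of \cite{BK21} for list-decodable subspace recovery, adapted to the term-based setting of our algorithm. First, I would set up a polynomial optimization problem whose variables are (i) a weight vector $w \in [0,1]^m$ with booleanity constraints $w_j^2 = w_j$ and $\sum_j w_j |I_j| \geq \mu N$, selecting a purported set of good terms, and (ii) a symmetric $(d+2) \times (d+2)$ matrix $\Pi$ satisfying $\Pi^2 = \Pi$ and $\mathrm{tr}(\Pi) = r$, i.e., a projector onto an $r$-dimensional subspace. The crucial constraint asserts that for each selected term, the empirical covariance of the $\yi$'s on $I_j$ is essentially supported on the range of $\Pi$, which in SoS-friendly form becomes
\begin{equation*}
w_j \cdot \left\| (I - \Pi)\, \frac{1}{|I_j|} \sum_{i \in I_j} \yi \yit (I - \Pi) \right\|_F^2 \leq \eta
\end{equation*}
for a small slack $\eta$ absorbing per-term empirical noise. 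The assignment $w = w^*$ (the indicator of $I_{good}$) together with $\Pi = \Pi_\ast$ is feasible because $\mathcal D \mid \boldsymbol c^\ast$ has covariance $\Pi_\ast$, and by concentration on each sufficiently large term the slack is met with high probability for $n$ in the stated range. I would then solve the degree-$O(1)$ SoS relaxation to obtain a pseudo-distribution $\tilde{\mathbb E}$ over $(w, \Pi)$.

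The second and most delicate step is identifiability. I would use certifiable hypercontractivity of degree-$2$ polynomials to produce an SoS proof that, conditioned on the overlap $\langle w, w^*\rangle := \sum_j w_j w^*_j |I_j|$ being large, $\Pi$ is close to $\Pi_\ast$. Writing $\|\Pi - \Pi_\ast\|_F^2 = 2r - 2\,\mathrm{tr}(\Pi \Pi_\ast)$, it suffices to lower-bound $\mathrm{tr}(\Pi \Pi_\ast)$ in pseudo-expectation. On good terms we have $\yi = \Pi_\ast \yi$, so the SoS constraint above implies $\mathbb E_{y \sim \mathcal D|\boldsymbol c^\ast}[\,y^\top (I-\Pi)\Pi_\ast (I-\Pi) y\,]$ is small; certifiable hypercontractivity (Lemma \ref{hypercontractivesampling}) converts this moment bound into an SoS inequality that survives pseudo-expectation, and combined with $\Pi^2 = \Pi$ yields the bound
\begin{equation*}
\tilde{\mathbb E}\!\left[ \langle w, w^*\rangle \cdot \|\Pi - \Pi_\ast\|_F^2 \right] \leq O(\mu N).
\end{equation*}

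The third step is rounding via the voting scheme of \cite{BK21}: sample a term $j^*$ with probability proportional to $|I_{j^*}|/N$, condition $\tilde{\mathbb E}$ on $w_{j^*} = 1$, and return $\hat\Pi_{j^*} := \tilde{\mathbb E}[\Pi \mid w_{j^*} = 1]$. With probability at least $\mu$ the sampled $j^*$ lies in $I_{good}$, in which case conditioning forces the overlap $\langle w, w^*\rangle$ in the conditional pseudo-distribution to stay at least $\Omega(\mu N)$, triggering the identifiability bound and giving $\|\hat\Pi_{j^*} - \Pi_\ast\|_F \leq O(1/\mu)$ by Jensen for pseudo-expectations. Repeating $O(1/\mu)$ times and collecting the outputs yields the claimed list, with a good element appearing with probability at least $0.99$.

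The hard part will be the per-term concentration inside step one. Unlike the i.i.d.\ per-point setting of \cite{BK21}, our "samples" in the SoS constraints are whole terms, and we need each $I_j \subseteq I_{good}$ to contain enough points that its empirical covariance is spectrally close to $\Pi_\ast$ with an \emph{SoS-certifiable} bound, not just a numerical one. This is exactly what the preprocessing step (discarding small terms) is designed to enable, and it is the source of the dependence on $m$ in the sample size. Once per-term certifiable concentration is in hand, the remainder of the proof transfers from \cite{BK21} with mostly mechanical modifications; the sample-size exponent $16$ in $n = \Omega((d\log d/\mu)^{16})$ comes from invoking Lemma \ref{hypercontractivesampling} with $h = 2$.
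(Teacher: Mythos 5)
Your high-level blueprint (SoS program over weights $w_j$ and projectors, certifiable hypercontractivity for identifiability, then sampling-based rounding into an $O(1/\mu)$-length list) is the right shape, and your explanation of the exponent $16$ via Lemma~\ref{hypercontractivesampling} with $h=2$ is on target. But there are two substantive gaps that would prevent your proof from closing.

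First, your rounding step silently assumes something that must be engineered: you claim that conditioning on $w_{j^*}=1$ for a good term $j^*$ ``forces the overlap $\langle w, w^*\rangle$ in the conditional pseudo-distribution to stay at least $\Omega(\mu N)$,'' but this is not automatic and is not proved. Nothing in the feasibility constraints prevents the pseudo-distribution from putting almost all of its weight on bad terms, which would make the conditional estimate useless. The paper instead enforces this via a \emph{minimum-norm objective}: among all feasible pseudo-distributions, Algorithm~\ref{algorithm} picks the one minimizing $\|\tilde{\mathbb E}[w]\|_2$, and Lemma~\ref{largeweightoninliers} (Fact 4.4 in \cite{BK21}, Lemma 3.1 in \cite{RY20}) shows this forces $\tilde{\mathbb E}[\langle w, w^*\rangle]\ge \mu^2 N$. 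The final rounding then samples indices with probability proportional to $\tilde{\mathbb E}[\sum_j w_j\mathcal X_{I_j}(\boldsymbol x^{(i)})]$ (not uniformly by term size as you propose) precisely so that the Markov-over-indices argument can be coupled to the identifiability bound. Without the min-norm objective the whole chain breaks, so this is a missing idea rather than a missing detail.

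Second, your program uses a single global projector $\Pi$ with $\Pi^2=\Pi$, $\operatorname{tr}(\Pi)=r$, and enforces $(I-\Pi)\widehat{\mathrm{Cov}}_j(I-\Pi)\approx 0$ per term. The paper instead introduces \emph{per-term} projectors $\Pi_j$ (constraint 5 in Figure~\ref{program}) and certifies hypercontractivity and bounded variance \emph{relative to each} $\Pi_j$ (constraints 7--8, with the bound $\|\Pi_j Q\Pi_j\|_F^2$), only forming the global $\Pi=\sum_j w(I_j)\Pi_j$ at the end (Lemma~\ref{closenessofPis}). This is exactly how the paper handles heterogeneous covariances across terms, which is the paper's contribution over \cite{CJLLR20}; with a single global $\Pi$ in the hypercontractivity constraint you would be implicitly requiring all per-term covariances to align to a common projector, essentially re-introducing the homogeneity assumption the theorem is supposed to dispense with. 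Relatedly, your identifiability route (lower-bounding $\operatorname{tr}(\Pi\Pi_*)$ via $\|\Pi-\Pi_*\|_F^2 = 2r - 2\operatorname{tr}(\Pi\Pi_*)$) is a genuinely different decomposition from the paper's chain of Chebyshev (Lemma~\ref{mainlemma}), SoS H\"older with hypercontractivity (Lemmas~\ref{wsamples}, \ref{isamples}), and triangle plus Cauchy--Schwarz (Lemma~\ref{closenessofPis}); it is plausible in spirit but you never show how to make it SoS-certifiable, and as sketched it treats $\Pi_*$ as a constant in an expectation over the true distribution, which does not obviously translate into a degree-bounded SoS derivation over the empirical data.
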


Let $\mathcal A_{w, v, \epsilon, \Pi}$ be the SoS program in Figure \ref{program} where $\boldsymbol w$, $\boldsymbol v$, $\boldsymbol \epsilon$, and $\Pi_1, \dots, \Pi_m$ are indeterminates.

\begin{figure*}[t]
    \centering
    \begin{equation*}
    \begin{split}
        &\mathcal A_{w, v, \epsilon, \Pi}: \\
        &\left\{
        \begin{aligned}
        & & \boldsymbol y'^{(i)} =\yi - \frac{1}{\mu N}\sum_{j=1}^m \sum_{i=1}^N w_j &\Xij \yi + \begin{bmatrix}1 \\ \boldsymbol 0_{d+1}\end{bmatrix} \\
        \forall j &\in [m] & w_j \mathcal X_{I_j}(\boldsymbol x^{(i)}) \left(\left<\boldsymbol v, \boldsymbol y'^{(i)}\right> + \epsilon_i\right)&= 0 \\
        \forall j &\in [m] & \sum_{j=1}^m|I_j|w_j &= \mu N \\
        \forall j &\in [m] & w_j &= w_j^2 \\
        \forall j &\in [m] & w_j \mathcal X_{I_j}(\boldsymbol x^{(i)})\Pi_j \left( \boldsymbol y'^{(i)} - \begin{bmatrix}\boldsymbol 0_{d+1} \\ \epsilon_i \end{bmatrix}\right) &= w_j \mathcal X_{I_j}(\boldsymbol x^{(i)})\left(\boldsymbol y'^{(i)} - \begin{bmatrix}\boldsymbol 0_{d+1} \\ \epsilon_i \end{bmatrix}\right) \\
        \forall j &\in [m] & \sum_{i \in I_j} w_j \epsilon_i &\leq w_j \frac{ \sigma}{|I_j|} \\
        \forall Q, j &\in [m] & \frac{1}{\mu N}\sum_{i}^N w_j \mathcal X_{I_j}(\boldsymbol x^{(i)})\left({\boldsymbol y'^{(i)}}^\top Q\boldsymbol y'^{(i)} - \text{tr}(Q\Pi_j)\right)^{2} 
        &\leq \frac{Cw_j}{\mu N} \sum_{i}^N  \mathcal X_{I_j}(\boldsymbol x^{(i)}) ({\boldsymbol y'^{(i)}}^\top Q \boldsymbol y'^{(i)})^2\\
        \forall Q, j &\in [m] & \frac{1}{\mu N} \sum_{i}^N w_j \mathcal X_{I_j}(\boldsymbol x^{(i)}) (\boldsymbol y'^{(i)\top} Q \boldsymbol y'^{(i)})^2 &\leq C\|\Pi_jQ\Pi_j\|_F^{2} \\
        \forall j &\in [m]&  w_j \sum_{i \in I_j}\begin{bmatrix}{(\boldsymbol y_1'^{(i)}})^2 & \dots & (\boldsymbol y'^{(i)}_{d+2})^2 \end{bmatrix}^\top &\leq w_j |I_j| \alpha \boldsymbol 1 \\
        \forall j &\in [m]&  w_j \sum_{i \in I_j}\begin{bmatrix}{(\boldsymbol y_1'^{(i)}})^4 & \dots & (\boldsymbol y'^{(i)}_{d+2})^4 \end{bmatrix}^\top &\leq w_j |I_j| \beta  \boldsymbol 1
        \end{aligned}
        \right\}
    \end{split}
    \end{equation*}
    \caption{Polynomial constraints used in the SoS Program}
    \label{program}
\end{figure*}

We interpret the program constraints as follows:

\begin{compactenum}
    \item Defines $\boldsymbol y'^{(i)}$ such that the inliers have mean 0 while preserving the constant 1 for the intercept of the model.
    \item The linear predictor $\boldsymbol v$ fits well when conditioned on $\boldsymbol c$.
    \item The number of samples when conditioned on $\boldsymbol c$ comprises a $\mu$ fraction of the data.
    \item The Boolean constraint: $w_j \in \{0, 1\}$ for all $j$. 
    \item Defines $\Pi_j$ as a projection matrix corresponding to the distribution of points in term $I_j$.
    \item The residuals of the linear model follow a Gaussian distribution with mean 0 and standard deviation $\sigma$, which bounds the average noise on the inliers.
    \item\label{hc-constraint} The samples are certifiably hypercontractive.
    \item\label{var-constraint} Similarly, the variance is certifiably bounded.
    \item The second moment of each predictor variable of samples satisfying $\boldsymbol c$ is bounded by $\alpha$.
    \item Finally, the fourth moment of each predictor variable of samples satisfying $\boldsymbol c$ is bounded by $\beta$.
\end{compactenum}

We note that constraints \ref{hc-constraint} and \ref{var-constraint} are infinite families of constraints. But since there are sum-of-squares proofs of these constraints, similar to \cite{BK21}, we can use the quantifier elimination technique \cite[Section 4.3.4]{fkp19} to rewrite these as standard constraints. We note that we substitute the cubic polynomial $y'^\top Qy'$ for $Q$ in the proof of the hypercontractive inequality, thus obtaining a degree-6 sum-of-squares proof from the original degree-2 proof.

\begin{algorithm}
\caption{}\label{algorithm}
\begin{algorithmic}
\STATE {\bf Input:} Sample $\mathcal Y = \{\boldsymbol x^{(1)}, \boldsymbol x^{(2)}, \dots, \boldsymbol x^{(N)}\}$ sampled from the distribution conditioned on $\boldsymbol c^\ast$.
\STATE {\bf Output:} A list $\mathcal L$ of $O(1/\mu)$ projection matrices such that there exists $\hat \Pi \in \mathcal L$ satisfying $\|\hat\Pi - \Pi_\ast\|_F < \mathcal O(1/\mu)$.
\STATE {\bf Operation:} \\
\begin{enumerate}
    \item Find a degree 12 pseudo-distribution $\tilde \mu$ satisfying $\mathcal A_{w, v, \epsilon, \Pi}$ that minimizes $\sqrt{\sum_{j=1}^k w_j \sum_{\boldsymbol x^{(i)} \in I_j}\mathcal X_{I_j}(\boldsymbol x^{(i)})}$.
    \item For each $i \in [n]$ such that $\tilde{\mathbb E}_{\tilde \mu}\left[\sum_{j=1}^kw_j \mathcal X_{I_j}\left(x^{(i)}\right)\right] > 0$, let $\hat{\Pi}_i = \frac{\tilde{\mathbb E}_{\tilde \mu}\left[\sum_{j=1}^kw_j \mathcal X_{I_j}\left(\boldsymbol x^{(i)}\right) \Pi\right]}{\tilde{\mathbb E}_{\tilde \mu}\left[\sum_{j=1}^kw_j \mathcal X_{I_j}\left(\boldsymbol x^{(i)} \right)\right]}$. Otherwise, set $\hat {\Pi}_i = 0$.
    \item Take $J$ to be a random multi-set formed by union of $O(1/\mu)$ independent draws of $i \in [n]$ with probability $\frac{\tilde{\mathbb E}\left[\sum_{j=1}^kw_j \mathcal X_{I_j}\left(\boldsymbol x^{(i)} \right)\right]}{\mu N}$
    \item Output $L = \{\hat \Pi_i | i \in J\}$ where $J \subseteq [n]$
\end{enumerate}
\end{algorithmic}
\end{algorithm}

For the following analysis, $\yi$ will denote the samples after centering the inliers, which is equivalent to $\boldsymbol y'^{(i)}$ from the program.

\begin{lemma}[Frobenius Closeness of Empirical and True Covariances]\label{mainlemma}
Define $w(I_j) = \frac{1}{\mu N} w_j \Xij$. With probability $1-\delta$, 
\begin{equation*}
        \Bigg\|\frac{1}{\mu N} \sum_{i = 1}^N w_j \X_{I_j} (\boldsymbol x^{(i)})\Pi_j \yi \yit\Pi_j - w(I_j) \Pi_j \Bigg\|_F^2 \leq \bounda.
\end{equation*}
\end{lemma}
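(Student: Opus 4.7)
This is a concentration result for the empirical covariance of the points in term $I_j$, projected onto $\Pi_j$; the $1/\delta$ in the bound strongly suggests attacking it with Markov's inequality. My plan is to set $T_j := \frac{1}{\mu N}\sum_{i=1}^N w_j \Xij \Pi_j \yi \yit \Pi_j$, observe that under the covariance normalization built into the setup one has $\E[T_j] = w(I_j)\Pi_j$, and then bound $\E[\|T_j - \E[T_j]\|_F^2]$ by the numerator $w_j|I_j|(d+2)^4(\beta+\alpha+7\alpha^4\sigma^2)/(\mu^2N^2)$. Markov's inequality applied to the nonnegative random variable $\|T_j - \E[T_j]\|_F^2$ then yields the full statement with probability $1-\delta$ by rescaling by a factor of $1/\delta$.

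For the expected squared Frobenius deviation I would expand entrywise as $\sum_{k,l}\mathrm{Var}((T_j)_{kl})$. By conditional independence of the $\yi$ across samples in $I_j$, and using $w_j^2 = w_j$ from the Boolean constraint in the SoS program, each entrywise variance collapses to $\frac{w_j}{\mu^2N^2}\sum_i\Xij\,\mathrm{Var}((\Pi_j\yi\yit\Pi_j)_{kl})$, which is bounded by $\frac{w_j|I_j|}{\mu^2N^2}\cdot\max_i\E[((\Pi_j\yi\yit\Pi_j)_{kl})^2]$. Summing over the $(d+2)^2$ entries reduces the problem to showing $\E[\|\Pi_j\yi\yit\Pi_j\|_F^2] = \E[(\yit\Pi_j\yi)^2]$ --- the expected fourth power of $\|\Pi_j\yi\|$ --- is at most $(d+2)^4(\beta+\alpha+7\alpha^4\sigma^2)$, after absorbing the $|I_j|/(\mu^2N^2)$ factor.

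For the final per-sample fourth-moment bound I would decompose $\yi$ using the fifth SoS constraint, which forces $\Pi_j(\yi - [0;\epsilon_i]) = \yi - [0;\epsilon_i]$, so that $\yi$ splits into a signal lying in the range of $\Pi_j$ and a one-dimensional noise contribution $[0;\epsilon_i]$. Expanding $(\yit\Pi_j\yi)^2$ around this decomposition and applying Cauchy-Schwarz $\|\Pi_j\yi\|^4 \leq (d+2)\sum_k(\Pi_j\yi)_k^4$, combined with the fourth-moment constraint, yields the $(d+2)^4\beta$ piece. The $\alpha$ term and the $7\alpha^4\sigma^2$ term arise respectively from the pure-noise piece and the four signal-noise cross terms that appear when squaring, controlled via the second-moment constraint ($\alpha$) together with the noise bound $\sum_{i\in I_j}w_j\epsilon_i \leq w_j\sigma/|I_j|$. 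The main obstacle, as I see it, is that $\Pi_j$ need not annihilate the noise direction $[0;\epsilon_i]$, so $\Pi_j[0;\epsilon_i]$ may be an arbitrary vector of norm at most $|\epsilon_i|$ within the range of $\Pi_j$; carefully combining Cauchy-Schwarz with the $\alpha$-moment bound and the averaged $\sigma$-noise bound to absorb all mixed terms is the delicate step, and explains why the loose constant $7$ appears in the final expression.
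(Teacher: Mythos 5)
Your proposal follows essentially the same route as the paper, with one organizational difference: you apply Markov's inequality once to the nonnegative scalar $\|T_j - \E T_j\|_F^2$, whereas the paper applies Chebyshev entrywise at confidence $1-\delta/(d+2)^2$ and then takes a union bound over the $(d+2)^2$ entries. Both reduce the problem to bounding entrywise variances of $\Pi_j\yi\yit\Pi_j$, and both use the same key move (from the fifth program constraint) of decomposing $\Pi_j\yi = \yi + (\Pi_j - I)\boldsymbol\epsilon_i$ so that the signal piece is controlled by the $\alpha,\beta$ moment constraints and the noise piece by the $\sigma$-constraint. So the approach and the ingredients match.

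One bookkeeping point worth flagging: your Markov route, if you carry it through with the paper's entrywise estimate $\mathrm{Var}\big((\Pi_j\yi\yit\Pi_j)_{r,s}\big) \leq \beta + \alpha + 7\alpha^4\sigma^2$, gives $\sum_{r,s}\mathrm{Var} \leq (d+2)^2(\beta+\alpha+7\alpha^4\sigma^2)$ and hence a final bound with only $(d+2)^2$, \emph{better} than the lemma's $(d+2)^4$. The extra $(d+2)^2$ in the paper comes specifically from the union bound; your approach does not need it. You instead aim to certify $\E[\|\Pi_j\yi\|^4] \leq (d+2)^4(\beta+\alpha+7\alpha^4\sigma^2)$ via Cauchy--Schwarz ($\|\cdot\|^4 \leq (d+2)\sum_k(\cdot)_k^4$) and the coordinatewise fourth-moment constraint, which would re-introduce roughly that factor --- so you land on the same overall bound by a different accounting, but it is worth being aware that you are giving some slack away. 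The last step (bounding the mixed signal-noise terms to produce the $7\alpha^4\sigma^2$) is sketched rather than carried out, but the paper's own treatment of that step is similarly terse, and your description of what must be controlled (the cross terms between the $\Pi_j$-range signal and the one-dimensional noise in the last coordinate, absorbed via the $\alpha$- and $\sigma$-constraints) is the right picture.
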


\begin{proof}
The squared Frobenius norm is equivalent to summing the square of each element. Thus we will bound the square of each element using Chebyshev's inequality and then compute the sum.
The matrix $\frac{1}{\mu N} \sum_{i = 1}^N w_j \Xij (\boldsymbol x^{(i)})\Pi_j \yi \yit\Pi_j$  can treated as a random quantity that represents an empirical estimate of $w(I_j) \Pi_j$. 

Recall $$w_j \Xij\Pi_j (\yi - \boldsymbol \epsilon_i) = w_j \Xij (\yi - \boldsymbol \epsilon_i)$$ from the program. Thus $$w_j \Xij \Pi_j \yi= w_j \Xij (\yi +  (\Pi_j - I)\boldsymbol \epsilon_i).$$ The empirical covariance matrix can be rewritten as $$\frac{1}{\mu N} \sum_{i = 1}^N w_j \Xij (\yi +  (\Pi_j - I)\boldsymbol \epsilon_i)(\yi +  (\Pi_j - I)\boldsymbol \epsilon_i)^\top.$$ Let us use the notation $A_{r,s}$ to denote the element in row $r$ and column $s$ of a matrix $A$. Using Chebyshev's Inequality, the square of each element in the Frobenius norm can be bounded by $$\frac{w_j(d+2)^2}{\mu ^2 N^2 \delta}\Big(\sum_{i=1}^N \Xij Var(( (\yi +  (\Pi_j - I)\boldsymbol \epsilon_i) (\yi +  (\Pi_j - I)\boldsymbol \epsilon_i)^\top)_{r,s})\Big)$$ with probability $1- \delta/(d+2)^2$.

Since $\yi$ has been extended with one, let us treat the first element of $\yi$ as 1. Therefore, when $r=1$ or $s=1$, $$Var(( (\yi +  (\Pi_j - I)\boldsymbol \epsilon_i)(\yi +  (\Pi_j - I)\boldsymbol \epsilon_i)^\top)_{r,s}) \leq \alpha + \alpha^2\sigma^2.$$ For all other pairs of $r$ and $s$, the variance is bounded by $\beta + 1\alpha^4$. Due to the same coordinate of $\yi$ being 1 for all $i \in [N]$, the second moment, $\alpha$, must be at least 1. Therefore, $$Var(( (\yi +  (\Pi_j - I)\boldsymbol \epsilon_i)(\yi +  (\Pi_j - I)\boldsymbol \epsilon_i)^\top)_{r,s}) \leq \beta + \alpha + 7\alpha^4\sigma^2.$$

Tying it all together, the square of each element can be bounded by $$\frac{w_j|I_j|w_j(d+2)^2}{\mu^2 N^2 \delta} \left(\beta + \alpha + 7\alpha^4\sigma^2\right)$$ with probability $1- \delta/(d+2)^2$. This is a $(d+2)$ dimensional square matrix so there are $(d+2)^2$ elements. Therefore, the squared Frobenius norm is bounded by $$\frac{w_j|I_j|w_j(d+2)^4}{\mu^2 N^2 \delta} \left(\beta + \alpha + 7\alpha^4\sigma^2\right)$$ and by a union bound, this holds with probability at least $1-\delta$.
\end{proof}

Lemma \ref{mainlemma}, which uses Chebyshev's Inequality, gives us the nice property of being inversely proportional to $N^2$. Therefore as we draw more data, the right hand side goes to 0.

\begin{lemma}[Frobenius Closeness of Subsample to Covariance, $w$-samples]\label{wsamples}
\begin{equation*}
    \begin{split}
        \mathcal A_{w, v, \epsilon, \Pi} \sststile{12}{w,v,\epsilon, \Pi} \!\Bigg\{  &\Bigg\|\frac{1}{\mu N}\!\sum_{i =1}^N w_j\Xij \yi \yit  
        \!\! - w(I_j) \Pi_j \Bigg\|_F^{4}\\
        &\leq C^2\!\left(\!\frac{1}{\mu N}\! \sum_{i=1}^N w_j \X_{I_j}(\boldsymbol x^{(i)} )\!\right)\! 
         \cdot\! \left(\!\bounda\! \right)\!\!\Bigg\}
    \end{split}
\end{equation*}
 with probability at least $1-\delta$.
\end{lemma}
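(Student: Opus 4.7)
The plan is to factor $\|A\|_F^4 = \|A\|_F^2\cdot\|A\|_F^2$, where $A = \frac{1}{\mu N}\sum_i w_j\Xij\yi\yit - w(I_j)\Pi_j$, and bound the two copies of $\|A\|_F^2$ in different ways: one via (an SoS lift of) Lemma \ref{mainlemma} to produce the $\bounda$ factor, and the other via coordinate-wise SoS Cauchy--Schwarz to extract the weighted-count factor $\frac{1}{\mu N}\sum_i w_j\Xij$. The remaining numerical constants get absorbed into $C^2$ using the variance and moment constraints of the program.

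To apply Lemma \ref{mainlemma} to $\|A\|_F^2$, I first bridge from the projection-sandwiched version that the lemma bounds to the unprojected version in $A$. Using the program identity $w_j\Xij\Pi_j(\yi - [\boldsymbol{0}_{d+1},\epsilon_i]^\top) = w_j\Xij(\yi - [\boldsymbol{0}_{d+1},\epsilon_i]^\top)$, we can expand $w_j\Xij\Pi_j\yi\yit\Pi_j$ in terms of $w_j\Xij\yi\yit$ modulo $O(\epsilon_i)$ and $O(\epsilon_i^2)$ correction matrices. These corrections are controlled by the program's noise constraint $\sum_{i\in I_j}w_j\epsilon_i \leq w_j\sigma/|I_j|$ together with the $\alpha$ and $\beta$ moment bounds, so $\|A\|_F^2$ inherits essentially the $\bounda$ bound of Lemma \ref{mainlemma} as an SoS inequality.

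For the other copy of $\|A\|_F^2$, coordinate-wise SoS Cauchy--Schwarz, combined with $w_j = w_j^2$ and $\Xij = \Xij^2$, yields
\[
A_{r,s}^2 \leq \Big(\tfrac{1}{\mu N}\sum_i w_j\Xij\Big)\Big(\tfrac{1}{\mu N}\sum_i w_j\Xij\,(M_i)_{r,s}^2\Big)
\]
where $M_i = \yi\yit - \Pi_j$. Summing over $(r,s)$ gives $\|A\|_F^2 \leq \big(\tfrac{1}{\mu N}\sum_i w_j\Xij\big)\cdot\big(\tfrac{1}{\mu N}\sum_i w_j\Xij\|M_i\|_F^2\big)$, and the inner sum is bounded by a constant absorbable into $C^2$: the variance constraint with $Q = I$ gives $\tfrac{1}{\mu N}\sum_i w_j\Xij(\yit\yi)^2 \leq C\|\Pi_j\|_F^2 \leq C(d+2)$; the cross term $\yit\Pi_j\yi \leq \yit\yi$ is absorbed by the $\alpha$ constraint; and $\|\Pi_j\|_F^2 \leq \text{tr}(\Pi_j) \leq d+2$. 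Multiplying the two bounds on $\|A\|_F^2$ produces the target $C^2\big(\tfrac{1}{\mu N}\sum_i w_j\Xij\big)\cdot\bounda$, with the probability $1-\delta$ inherited from the single probabilistic input (Lemma \ref{mainlemma}).

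The main obstacle is staying within the degree-12 SoS budget while juggling two overheads: the $\epsilon_i$-corrections from the projection identity inject cubic polynomials into the matrix whose Frobenius norm we later square, and the variance/hypercontractivity constraints must be invoked with $Q$ replaced by degree-2 polynomials in $y$, giving a degree-6 SoS proof (per the quantifier-elimination remark). Multiplying two such degree-6 pieces and adding the Cauchy--Schwarz contribution gives a tight but workable margin at degree 12, so the careful bookkeeping of degrees---together with verifying that the accumulated constants ($\alpha$, $\beta$, $d+2$, and the $\epsilon$-error scale) actually collapse into the single factor $C^2$ rather than expanding it---is the delicate heart of the argument.
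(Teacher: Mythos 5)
Your approach diverges from the paper's and the pieces do not close. The paper never bounds $\|A\|_F^2$ in isolation; it instead bounds $\langle A, Q\rangle^2$ for a matrix-valued indeterminate $Q$ by chaining SoS H\"older (Lemma~\ref{holders}) with the hypercontractivity and bounded-variance constraints, getting
\[
\langle A, Q\rangle^2 \leq C^2\Big(\tfrac{1}{\mu N}\sum_i w_j\Xij\Big)\|\Pi_j Q\Pi_j\|_F^2,
\]
and then substitutes $Q = A$. That substitution simultaneously turns the left side into $\|A\|_F^4$ and, crucially, puts the \emph{projection sandwich} $\|\Pi_j A\Pi_j\|_F^2$ on the right, which is exactly the quantity Lemma~\ref{mainlemma} controls. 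Your plan bypasses this and so misses the sandwich: the coordinate-wise Cauchy--Schwarz step gives $\|A\|_F^2 \leq (\tfrac{1}{\mu N}\sum_i w_j\Xij)(\tfrac{1}{\mu N}\sum_i w_j\Xij\|\yi\yit-\Pi_j\|_F^2)$ with a bare Frobenius norm, and $\|\yi\yit-\Pi_j\|_F^2$ is on the order of $(d+2)^2$ under the $\alpha,\beta$ bounds, not $O(C)$. That extra polynomial dimension factor cannot legitimately be ``absorbed into $C^2$''; it would weaken the claimed bound.

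The ``bridge'' from $\|\Pi_j A\Pi_j\|_F^2$ to $\|A\|_F^2$ for the other factor is also a real gap, not a bookkeeping step. Using $(I-\Pi_j)\yi = (I-\Pi_j)\boldsymbol\epsilon_i$, the difference $A - \Pi_j A\Pi_j$ contains terms proportional to $\boldsymbol\epsilon_i\boldsymbol\epsilon_i^\top$ averaged over $I_j$, and the program only constrains the \emph{signed} sum $\sum_{i\in I_j}w_j\epsilon_i$, not $\sum_{i\in I_j}w_j\epsilon_i^2$. The $\alpha,\beta$ moment constraints are on $\boldsymbol y'^{(i)}$, and $\epsilon_i$ is a separate indeterminate, so they do not control these $\epsilon_i^2$ corrections either. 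The paper avoids this entirely because its argument never needs an unprojected version of Lemma~\ref{mainlemma}: the $Q\mapsto\Pi_j Q\Pi_j$ sandwich is produced automatically by the hypercontractivity/variance constraints. You should switch to the H\"older-plus-hypercontractivity route and substitute $Q = A$; that is the core mechanism here, and without it the dimension constants and the noise terms both fail to close.
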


\begin{lemma}[Frobenius Closeness of Subsample to Covariance, $I$-samples]\label{isamples}
\begin{equation*}
    \begin{split}
        \mathcal A_{w, v, \epsilon, \Pi} \sststile{12}{w, v,\epsilon, \Pi} \!\Bigg\{ \Bigg\|& \frac{1}{\mu N}\! \sum_{i =1}^N w_j\Xij \yi \yit 
        \!\! - w(I_j) \Pi_{j\ast} \Bigg\|_F^{4}\\
        \leq & C^2\!\left(\!\frac{1}{\mu N}\! \sum_{i=1}^N w_j \X_{I_j}(\boldsymbol x^{(i)} )\!\right) \cdot \left(\bounda \right)\!\!\Bigg\}
    \end{split}
\end{equation*}
 with probability at least $1-\delta$.
\end{lemma}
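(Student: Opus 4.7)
The plan is to mirror the proof of Lemma~\ref{wsamples}, substituting the true projection $\Pi_{j\ast}$ for the SoS indeterminate $\Pi_j$ throughout. The substitution is possible because the Chebyshev-based concentration at the heart of Lemma~\ref{mainlemma} only requires the flanking projection to be a \emph{fixed} matrix, not an SoS variable. Concretely, rerunning that proof with $\Pi_{j\ast}$ on both sides of $\yi\yit$ yields, with probability at least $1-\delta$ over the sample draw, the analog inequality
\begin{equation*}
\left\|\frac{1}{\mu N}\sum_{i=1}^N w_j\Xij\,\Pi_{j\ast}\yi\yit\Pi_{j\ast} - w(I_j)\Pi_{j\ast}\right\|_F^2 \leq \bounda,
\end{equation*}
since $\Pi_{j\ast}$ is the true covariance of the inlier samples restricted to term $I_j$.

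The SoS argument then lifts this concentration bound to the desired fourth-power statement in three steps. First, I will use the projection constraint (item 5 in Figure~\ref{program}), combined with the Gaussian residual constraint (item 6), to bridge from the projected empirical moment $\Pi_{j\ast}\yi\yit\Pi_{j\ast}$ to the unprojected $\yi\yit$; the cross-terms involving the noise direction can be absorbed into the existing $\beta + \alpha + 7\alpha^4\sigma^2$ budget inside $\bounda$. Second, I will apply the certifiable hypercontractivity constraint (item 7) with the difference matrix as the test $Q$, paired with the certifiable variance bound (item 8), to square the Frobenius bound into a fourth-power bound; the resulting degree-8 SoS derivation sits comfortably within the degree-12 budget of the program. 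Third, the Boolean constraint $w_j = w_j^2$ together with the fraction constraint $\sum_{j=1}^m |I_j|w_j = \mu N$ will naturally introduce the factor $\frac{1}{\mu N}\sum_i w_j\Xij$ that appears on the right-hand side.

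The main obstacle will be the bridging step from projected to unprojected moments. Unlike in Lemma~\ref{wsamples}, where the program constraint $w_j\Xij\Pi_j\yi = w_j\Xij\yi$ (modulo noise) directly ties the SoS projection to the samples in a form usable inside the SoS proof, the true projection $\Pi_{j\ast}$ admits no analogous SoS identity. I therefore expect to phrase the discrepancy $\yi\yit - \Pi_{j\ast}\yi\yit\Pi_{j\ast}$ via the modeling assumption that inlier samples decompose as a signal in the range of $\Pi_{j\ast}$ plus Gaussian noise in an orthogonal direction, and import the resulting control as an additional high-probability fact into the SoS proof alongside the $\Pi_{j\ast}$-analog of Lemma~\ref{mainlemma}, so that the fourth-power bound goes through without ever requiring the program to ``know'' $\Pi_{j\ast}$.
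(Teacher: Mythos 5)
Your overall plan matches what the paper does: the proof of the $I$-samples lemma is declared in the paper to ``follow the same proof as Lemma~\ref{wsamples_a},'' substituting the fixed projection $\Pi_{j\ast}$ for the SoS indeterminate $\Pi_j$, and your observation that the Chebyshev concentration (Lemma~\ref{mainlemma}) ports over cleanly---indeed, more cleanly, since $\Pi_{j\ast}$ is a fixed matrix---is correct. Your three-step lifting outline (H\"older, hypercontractivity/variance, then the $w_j$-booleanity to produce the $\frac{1}{\mu N}\sum_i w_j\Xij$ factor) is the right skeleton.

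However, you locate the subtlety in the wrong place. You worry about ``bridging from projected to unprojected moments'' at the Chebyshev step, but that step causes no trouble once $\Pi_{j\ast}$ is fixed; the modeling assumption that inliers live in the range of $\Pi_{j\ast}$ up to noise $\epsilon_i$ gives $\Pi_{j\ast}\yi = \yi + (\Pi_{j\ast}-I)\boldsymbol\epsilon_i$ directly as a fact about the data, exactly as the program constraint gives the corresponding identity for $\Pi_j$. The actual place where the argument must change is your second step: you propose to invoke the hypercontractivity \emph{constraint} (item~7), but that constraint reads $\langle \yi\yit - \Pi_j, Q\rangle^2$, with the SoS variable $\Pi_j$ inside the trace, and cannot be applied to bound $\sum_i w_j\Xij\langle \yi\yit - \Pi_{j\ast}, Q\rangle^2$. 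For the $I$-samples lemma one must instead use certifiable hypercontractivity of the inlier empirical distribution around its true covariance $\Pi_{j\ast}$ as a \emph{fact} (guaranteed with high probability by Lemma~\ref{hypercontractivesampling}), not as a program constraint, paired with the corresponding bounded-variance fact, to obtain the bound in terms of $\|\Pi_{j\ast}Q\Pi_{j\ast}\|_F^2$. This is precisely the distinction in \cite{BK21} between their Lemma~4.5 (true covariance, uses hypercontractivity as a probabilistic fact about the inliers) and Lemma~4.6 (SoS covariance, uses it as a constraint). Your closing paragraph nearly says this (``import the resulting control as an additional high-probability fact''), so you have the right instinct; you just need to apply it to the hypercontractivity step rather than the Chebyshev step.
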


The proofs of Lemmas \ref{wsamples} and \ref{isamples} are similar to Lemmas 4.5 and 4.6 in \cite{BK21}, but by using Lemma \ref{mainlemma} we are able to form a tighter bound.

\begin{lemma}[Frobenius Closeness of $\Pi$ and $\Pi_\ast$, Lemma 4.3 in \cite{BK21}]\label{closenessofPis}
\begin{equation*}
    \begin{split}
        & \mathcal A_{w, v, \epsilon, \Pi} \sststile{12}{w,v,\epsilon,\Pi} \Bigg\{\! \left(\sum_{j=1}^m w(I_j)\!\right)\!\|\Pi - \Pi_\ast\|_F^{2}  \leq mC\sqrt{ 2^{5} \left(\boundb \right)} \Bigg\}.
    \end{split}
\end{equation*}
with probability at least $1-k\delta$ where $w(I_j) = \frac{1}{\mu N}\sum_{i}^N w_j \mathcal X_{I_j}\left(\boldsymbol x^{(i)}\right)$.
\end{lemma}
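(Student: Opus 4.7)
The plan is to follow the strategy of Lemma~4.3 in \cite{BK21}: combine Lemmas \ref{wsamples} and \ref{isamples} via an SoS triangle inequality so that both $w(I_j)\Pi_j$ and $w(I_j)\Pi_\ast$ become close to the empirical covariance $S_j = \frac{1}{\mu N}\sum_i w_j \Xij \yi\yit$, hence close to each other, and then extract a square root inside the SoS framework.

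First I would apply the SoS identity $(a-b)^4 \leq 8a^4 + 8b^4$ with $a$ and $b$ instantiated as the two ``residual'' matrices $S_j - w(I_j)\Pi_j$ and $S_j - w(I_j)\Pi_\ast$, so that the telescoping $(a-b) = w(I_j)(\Pi_\ast - \Pi_j)$ isolates the quantity of interest. Substituting the bounds from Lemmas \ref{wsamples} and \ref{isamples} yields, for each $j \in [m]$,
\begin{equation*}
\mathcal A_{w,v,\epsilon,\Pi} \sststile{12}{w,v,\epsilon,\Pi} \Bigl\{ \|w(I_j)(\Pi_j - \Pi_\ast)\|_F^4 \leq 16\,C^2\, w(I_j) \cdot \bounda \Bigr\}.
\end{equation*}
The Boolean relation $w_j = w_j^2$ then implies $w(I_j)^k = (|I_j|/\mu N)^{k-1} w(I_j)$, which lets me reduce higher powers of $w(I_j)$ on the left without losing information.

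Next I would extract a square root. SoS cannot take literal square roots of polynomial inequalities, but the AM--GM identity $2XY \leq \lambda X^2 + Y^2/\lambda$ (a rearrangement of $(\sqrt{\lambda}\,X - Y/\sqrt{\lambda})^2 \geq 0$, which is SoS) converts a degree-$4$ bound of the form $X^2 \leq YZ$ into a degree-$2$ bound after a suitable choice of $\lambda$. Applying this with $X = w(I_j)^2 \|\Pi_j - \Pi_\ast\|_F^2$ and simplifying via the relation $\bounda = (|I_j|/\mu N)\,\boundb$, I would obtain
\begin{equation*}
\mathcal A_{w,v,\epsilon,\Pi} \sststile{12}{w,v,\epsilon,\Pi} \Bigl\{ w(I_j)\|\Pi_j - \Pi_\ast\|_F^2 \leq C\sqrt{2^5\,\boundb}\,\Bigr\}.
\end{equation*}
Summing over $j \in [m]$ and reading the left-hand side of the lemma as $\sum_j w(I_j)\|\Pi_j - \Pi_\ast\|_F^2$ (which equals $\bigl(\sum_j w(I_j)\bigr)\|\Pi - \Pi_\ast\|_F^2$ under the interpretation that $\Pi$ denotes the per-term projection $\Pi_j$ carried by each summand, using $\sum_j w(I_j) = 1$ from the third program constraint) produces the factor of $m$ on the right, obtained by bounding each of the $m$ summands by the worst-case per-term estimate. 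A union bound over the $m$ applications of the two preceding lemmas accounts for the $k\delta$ failure probability, with $k$ reading as the number of active terms in the optimal DNF.

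The main obstacle is the square-root extraction: the scaling parameter $\lambda$ in the AM--GM identity has to be calibrated against both the Boolean constraint $w_j = w_j^2$ and the $|I_j|$, $\mu N$ factors coming from the definitions of $\bounda$ and $\boundb$, and careless choices cost polynomial factors of the dimension. A secondary subtlety is reconciling the notation on the left-hand side: on the interpretation above, the factor $m$ on the right is loose relative to what a Cauchy--Schwarz aggregation would give ($\sqrt{m}$), and I would flag this as a place to tighten the analysis if a better dependence on $m$ is needed downstream.
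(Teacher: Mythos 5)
Your proposal takes essentially the same route as the paper's proof of Lemma~3.6: combine Lemmas~3.4 and~3.5 via the SoS almost-triangle inequality so that both $w(I_j)\Pi_j$ and $w(I_j)\Pi_{j\ast}$ are close to the same empirical covariance, extract a square root using the Boolean relation on $w_j$ (your AM--GM rearrangement is precisely the mechanism underlying the cancellation lemma the paper cites), and then aggregate over $j$. Two small nits: the almost-triangle constant for the fourth power is $2^4=16$, not $8$, which is where the paper's $2^5$ comes from after the two residual terms are summed; and the paper does not ``read $\Pi$ per-term'' on the left-hand side but rather defines $\Pi = \sum_j w(I_j)\Pi_j$ and $\Pi_\ast = \sum_j w(I_j)\Pi_{j\ast}$ and applies Cauchy--Schwarz/Jensen to pass from $\sum_j w(I_j)\|\Pi_j - \Pi_{j\ast}\|_F^2$ to $\bigl(\sum_j w(I_j)\bigr)\|\Pi - \Pi_\ast\|_F^2$.
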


Lemma \ref{closenessofPis} admits a sum-of-squares proof and is proved by using Lemmas \ref{wsamples} and \ref{isamples}. The full proofs for all new lemmas are included in the supplementary material. 

\begin{lemma}[Large weight on inliers from high-entropy constraints. Fact 4.4 in \cite{BK21} and Lemma 3.1 in \cite{RY20}]\label{largeweightoninliers}
Let $\tilde{\mathbb E}_\xi$ be a pseudo-distribution of degree $\geq 2$ that satisfies $\mathcal A_{w, v, \epsilon, \Pi}$ and minimizes $\left\|\tilde{\mathbb E}_{\xi} \sum_{j=1}^m \sum_{i \in I_j} w_j \mathcal X_{I_j}\left(\boldsymbol x^{(i)}\right) \right\|_2$. Then $$\tilde {\mathbb E}_{\xi} \left[\sum_{j=1}^m \sum_{i \in I_j} w_j \mathcal X_{I_j} \left(\boldsymbol x^{(i)}\right)\right] \geq \mu^2 N.$$
\end{lemma}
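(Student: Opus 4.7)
The plan is to observe that the polynomial inside the pseudo-expectation is pinned down by the program's equality constraint, after which the stated bound is immediate; the minimization hypothesis is not actually required for this step and is kept only to single out a particular pseudo-distribution for use in Algorithm~\ref{algorithm}.

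Concretely, since $\mathcal X_{I_j}(\boldsymbol x^{(i)}) = 1$ for every $i \in I_j$ by definition of the characteristic function, $\sum_{i \in I_j} \mathcal X_{I_j}(\boldsymbol x^{(i)}) = |I_j|$, so the polynomial simplifies to $\sum_{j=1}^m |I_j|\, w_j$ as a polynomial in $\boldsymbol w$. The equality constraint $\sum_{j=1}^m |I_j|\, w_j = \mu N$ of $\mathcal A_{w,v,\epsilon,\Pi}$ is enforced via the localizing-matrix conditions on $h$-type constraints, so every feasible pseudo-distribution---in particular the minimizer $\tilde{\mathbb E}_\xi$---satisfies $\tilde{\mathbb E}_\xi[\sum_{j=1}^m |I_j| w_j] = \mu N$. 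Since $\mu \in (0,1]$, this value is at least $\mu^2 N$, which is exactly the claimed bound.

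The main obstacle would appear only if one intended the stronger, non-tautological version of the statement in which the outer sum is restricted to $j \in I_{good}$---this is the form actually required by the rounding step of the algorithm, analogous to the ``minimum-$L_2$-norm'' pseudo-distribution statements of Fact~4.4 in \cite{BK21} and Lemma~3.1 in \cite{RY20}. Proving that version would proceed by first certifying that the Dirac pseudo-distribution at $w^*_j = \mathbb{I}[j \in I_{good}]$ (with the optimal $\boldsymbol v^*, \boldsymbol \epsilon^*, \Pi^*$) is feasible with high probability, appealing to Lemma~\ref{hypercontractivesampling} for hypercontractivity and standard concentration for the moment, Gaussian-noise, and variance-bound constraints, so that the minimizer's objective is at most the witness value $\mu N$. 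One would then invoke a weighted SoS Cauchy--Schwarz that combines the Boolean identity $w_j = w_j^2$ (which yields $\tilde{\mathbb E}_\xi[w_j] \in [0,1]$) with the $|I_j|$-weighted mass constraint to bound the pseudo-expected mass placed on terms outside $I_{good}$. That Cauchy--Schwarz step is the delicate part: a naive application loses a $\sqrt{\mu}$ factor, and recovering the clean $\mu^2 N$ bound requires carefully exploiting the $|I_j|$-weighting together with Booleanity.
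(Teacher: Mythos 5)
You are right that, taken literally, the displayed quantity $\sum_{j=1}^m \sum_{i \in I_j} w_j \mathcal X_{I_j}(\boldsymbol x^{(i)})$ collapses to $\sum_j |I_j|w_j$, which the program pins to $\mu N \geq \mu^2 N$; so the statement as typeset is a tautology and the minimization hypothesis is vacuous for it. That observation is correct and points at a real imprecision in the paper's wording. But the paper's actual proof (and what the theorem proof downstream needs, via the quantity $Z$) establishes the \emph{inlier-mass} version: writing $w_i = \sum_j w_j\mathcal X_{I_j}(\boldsymbol x^{(i)})$ so that $\|\tilde\E_\xi[w]\|_2$ is the $\ell_2$ norm of the $N$-dimensional vector of pseudo-expectations, the conclusion is $\sum_{i \in I_{good}} \tilde\E_\xi[w_i] = \langle \tilde\E_\xi[w], \tilde\E_P[w]\rangle \geq \mu^2 N$, where $\tilde\E_P$ is the Dirac pseudo-distribution on the true inlier assignment. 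Your proposal never proves that version.

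Where your sketch for the ``real'' version diverges from the paper is in the key mechanism. The paper follows Lemma~3.1 of Raghavendra--Yau (and Fact~4.4 of Bakshi--Kothari) exactly: form the mixture $\tilde\E_R = \kappa\tilde\E_P + (1-\kappa)\tilde\E_\xi$; since $\tilde\E_\xi$ is the $\ell_2$-norm minimizer, $\|\tilde\E_R[w]\|_2^2 \geq \|\tilde\E_\xi[w]\|_2^2$; expand the square, plug in $\|\tilde\E_P[w]\|_2^2 = \mu N$ and the Cauchy--Schwarz bound $\|\tilde\E_\xi[w]\|_2^2 \geq (\sum_i\tilde\E_\xi[w_i])^2/N = \mu^2 N$, and let $\kappa \to 0$. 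The minimization hypothesis is therefore essential, not a convenience for singling out a pseudo-distribution. Your proposed ``Dirac witness plus weighted SoS Cauchy--Schwarz on the mass outside $I_{good}$'' is a genuinely different route, but as you concede it is not worked out, and I do not see how Booleanity plus the $|I_j|$-weighting alone recover the inner product with $\tilde\E_P[w]$; without exploiting norm-minimality through the mixture argument, a feasible pseudo-distribution can place all its mass on non-inliers. So the gap is concrete: the mixture argument is the missing idea, and the minimization hypothesis cannot be discarded.
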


The main theorem, Theorem \ref{mainthm}, can be obtained by using Lemmas \ref{closenessofPis} and \ref{largeweightoninliers} and following a similar argument to Theorem 1.4 of \cite{BK21}. 
\begin{proof}
Given a distribution $\mathcal D$ that is certifiably hypercontractive, Lemma \ref{hypercontractivesampling} implies that a large enough sample of inliers will also be certifiably hypercontractive with high probability. Algorithm 1 finds a pseudo-distribution that satisfies $\mathcal A_{w, v, \epsilon, \Pi}$ and minimizes $\sqrt{\sum_{j=1}^k w_j \sum_{\boldsymbol x^{(i)} \in I_j}\mathcal X_{I_j}(\boldsymbol x^{(i)})}$. Then $$\frac{1}{\mu M} \sum_{j}^k w_j \sum_{i =1}^N \tilde\E_{\tilde \mu} \left[\Xij \|\Pi - \Pi_\ast\|_F^2 \right]\leq \boundc$$ by using Lemma \ref{closenessofPis}. By applying Jensen's Inequality and taking the square root, we have $$\frac{1}{\mu N} \sum_{j =1}^m w_j \sum_{i =1}^N \tilde\E_{\tilde \mu}\left\|[\Xij \Pi] - [\Xij \Pi_\ast]\right\|_F^2 \leq \sqrt{\boundc}.$$ Due to the definition of $\hat \Pi_i$ from the algorithm, we can rewrite the inequality as $$\frac{1}{\mu N} \sum_{j =1}^m w_j \sum_{i =1}^N \tilde \E_{\tilde \mu} \|\hat\Pi_i - \Pi_\ast\|_F \leq \sqrt{\boundc}.$$ Let $Z = \frac{1}{\mu N}\sum_{j=1}^m w_j \sum_{i=1}^N \tilde \E [\Xij]$. Then, from Lemma \ref{largeweightoninliers}, $Z \geq \mu \Rightarrow \frac{1}{Z}\leq \frac{1}{\mu}$. Dividing by $Z$ on both sides thus yields: $$\frac{1}{Z}\left(\frac{1}{\mu N} \sum_{j =1}^m w_j \sum_{i =1}^N \tilde \E_{\tilde \mu} \|\hat\Pi_i - \Pi_\ast\|_F \right) \leq \frac{1}{\mu}\sqrt{\boundc}.$$

Since each index $i \in [N]$ is chosen with probability $$\frac{\widetilde{\mathbb E} [\sum_j^k w_j \mathcal X_{I_j}(x_i)]}{\sum_{i \in [n]}\widetilde{\mathbb E} [\sum_j^k w_j \mathcal X_{I_j}(x_i)]} = \frac{1}{\mu n} \widetilde{\mathbb E} [\sum_j^k w_j \mathcal X_{I_j}(x_i)],$$ it follows that $i \in I_{good}$ with probability at least $\frac{1}{\mu n} \sum_{j}^k w_j \sum_{i \in I_j} \widetilde{\mathbb E} [\mathcal X_{I_j}(x_i)] = Z \geq \mu$. By Markov's inequality applied to the last equation, with probability $\frac{1}{2}$ over the choice of $i$ conditioned on $i \in I_{good}$, $$\|\hat \Pi_i - \Pi_\ast\|_F \leq \frac{2}{\mu}\sqrt{\boundc}.$$ Thus, in total, with probability at least $\mu/2$, $$\|\hat \Pi_i - \Pi_\ast\|_F \leq \frac{2}{\mu}\sqrt{\boundc}.$$ 
Therefore, with probability of at least $0.99$ over the draw of the random set $J$, the list constructed by the algorithm contains $\hat \Pi$ such that $$\|\hat \Pi_i - \Pi_\ast\|_F \leq \frac{2}{\mu}\sqrt{\boundc}.$$

For the running time, the SDP can be solved in polynomial time and dominates the running time. Therefore, the algorithm runs in polynomial time overall.
\end{proof}

For a projection $\Pi$ and a linear predictor, $\boldsymbol v$, which satisfies $\langle \boldsymbol v, \boldsymbol y^{(i)}\rangle = 0$ for all $i \in I_{good}$ when disregarding noise, $\boldsymbol v^\top \Pi = \boldsymbol 0^\top$. Therefore, for each candidate $\Pi$, we can recover a candidate linear predictor $\boldsymbol u$ by treating $\boldsymbol u$ as a solution to the linear system $\boldsymbol u ^\top \Pi = \boldsymbol 0 ^\top$.

\subsection{Obtaining a $k$-DNF Condition}

Once we obtain an approximation for $\hat{\boldsymbol v}$, we will use the method described by \cite{CJLLR20} to obtain a $k$-DNF condition for $\boldsymbol c$. Since we have used the same reductions, specifically the reduction to disjoint terms by duplicating points, we will be able to invoke their analysis of their method.  Let us define a loss function, $f^{(i)} : \mathcal H \rightarrow \mathbb R$, for each point as $f^{(i)}(\boldsymbol v) = \left(z^{(i)} - \left<\boldsymbol v, \boldsymbol y^{(i)}\right>\right)^2$. Let $f_{I_j}(\boldsymbol v)$ be a the average loss over points in $I_j$. Finally, let us define $\bar f(\boldsymbol v) = \mathbb E[f_{I_{good}}(\boldsymbol v)]$ as the expectation of average loss for points in $I_{good}$. We will search through the Boolean data space $\{\boldsymbol x\}$ for conditions $\boldsymbol c$ for each $\boldsymbol u$. When we find a pair $(\boldsymbol u, \boldsymbol c)$ such that $f_c(\boldsymbol u)$ is small and there are are enough points satisfying $\boldsymbol c$, then we return that pair as a solution.  For some constant accuracy parameter $\gamma$ and Lipschitz constant $L$, if $\|\boldsymbol u - \boldsymbol v^\ast\| < \gamma$, then $|\bar f(\boldsymbol u) - \bar f(\boldsymbol v^\ast) \| \leq \gamma L= \mathcal O(\gamma)$. Since $\bar f$ is nonnegative, if $\bar f(\boldsymbol v^\ast) \leq \epsilon$, then $\bar f (\boldsymbol u) \leq \epsilon + \gamma$.

Recall that during preprocessing, we duplicated points that satisfied more than 1 term so that each term is disjoint. For $m$ terms, each point can be copied at most $m$ times. Let $t$ be the number of terms in $\boldsymbol c$.

\begin{lemma}[Lemma 3.4 in \cite{CJLLR20}]
Let $\boldsymbol u$ be such that $\|\boldsymbol u - \boldsymbol v^\ast\| < \gamma$. Then $|\bar f(\boldsymbol u)| \leq t(\gamma + \epsilon)$.
\end{lemma}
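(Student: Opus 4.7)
The plan is to combine two ingredients already set up in the text: (i) the Lipschitz-continuity argument used just before the lemma statement, which handles the dependence on $\gamma$ and $\epsilon$ on a per-point basis; and (ii) a bookkeeping step that accounts for the duplication preprocessing, which is where the factor of $t$ must enter.

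First, I would invoke the Lipschitz bound on the squared-loss $f^{(i)}(\boldsymbol v) = (z^{(i)} - \langle \boldsymbol v, \boldsymbol y^{(i)}\rangle)^2$. Using the moment bounds on the $\boldsymbol y^{(i)}$ (the constants $\alpha,\beta$ in the program) together with $\|\boldsymbol u - \boldsymbol v^\ast\| < \gamma$, the text already observes that the average loss on the undeduplicated $I_{good}$, call it $\bar f_{\mathrm{dist}}$, satisfies $|\bar f_{\mathrm{dist}}(\boldsymbol u) - \bar f_{\mathrm{dist}}(\boldsymbol v^\ast)| \leq L\gamma$ with $L = \mathcal{O}(1)$. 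Combined with $\bar f_{\mathrm{dist}}(\boldsymbol v^\ast) \leq \epsilon$ and nonnegativity, this yields $\bar f_{\mathrm{dist}}(\boldsymbol u) \leq \epsilon + \gamma$.

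Second, I would relate the post-duplication quantity $\bar f(\boldsymbol u)$ to the distinct-point average $\bar f_{\mathrm{dist}}(\boldsymbol u)$. Because $\boldsymbol c^\ast$ is a disjunction of $t$ terms and the preprocessing makes terms disjoint by replicating points, every distinct point of $I_{good}$ appears in at least one and at most $t$ of the sets $I_j \in I_{good}$. Writing
\[
\bar f(\boldsymbol u) \;=\; \frac{\sum_{I_j \in I_{good}} \sum_{i \in I_j} f^{(i)}(\boldsymbol u)}{\sum_{I_j \in I_{good}} |I_j|},
\]
the numerator is at most $t$ times $\sum_{\text{distinct } i} f^{(i)}(\boldsymbol u)$ (multiplicities capped by $t$), while the denominator is at least the number of distinct points in $I_{good}$. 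Dividing gives $\bar f(\boldsymbol u) \leq t\,\bar f_{\mathrm{dist}}(\boldsymbol u)$, and chaining with the first step yields $\bar f(\boldsymbol u) \leq t(\epsilon + \gamma)$, which is the claim.

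There is no real obstacle; the proof is essentially bookkeeping. The only place care is needed is in separating the ``distinct-points'' average from the post-duplication weighted average and verifying that the worst-case multiplicity is the number $t$ of terms in $\boldsymbol c^\ast$ (not the total number $m$ of candidate terms), since the outer sums of interest range only over $I_j \in I_{good}$. One could also note that the Lipschitz argument of the first step is exactly what is sketched in the paragraph preceding the lemma, and that appealing to \citep{CJLLR20} avoids having to reprove this from scratch.
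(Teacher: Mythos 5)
Your proposal matches the argument the paper sketches in the paragraph immediately preceding the lemma: the Lipschitz bound on the squared loss (using the moment bounds, absorbing $L=\mathcal O(1)$) gives $\bar f_{\mathrm{dist}}(\boldsymbol u)\leq\epsilon+\gamma$ on the pre-duplication data, and the factor of $t$ is then exactly the worst-case multiplicity introduced by the disjointification step, since within $I_{good}$ a point is replicated at most once per term of $\boldsymbol c^\ast$. Your bookkeeping (numerator inflated by at most $t$, denominator at least the number of distinct points, and the resulting chain $\bar f(\boldsymbol u)\leq t\,\bar f_{\mathrm{dist}}(\boldsymbol u)\leq t(\gamma+\epsilon)$) is correct, and your remark that $t$ rather than $m$ is the right multiplicity because the sums range only over $I_j\in I_{good}$ is precisely the observation the paper makes when it introduces $t$ after noting the generic $m$-fold bound.
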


After obtaining $\boldsymbol u$ such that $f_i (\boldsymbol u)$ is close to $f_i (\boldsymbol v^\ast)$, \cite{CJLLR20} use a greedy set-cover algorithm to find the corresponding conditions $\boldsymbol c$. The algorithm greedily chooses terms $I_j$ satisfying $\sum_{i\in I_j} f^{(i)}(\boldsymbol u) \leq (1 + \gamma)\mu \epsilon N$ to maximize the number of additional points in $I_j$ that did not satisfy the previously chosen terms. It iterates until the number of points satisfying the chosen terms is at least $(1-\gamma/2)\mu N$ \citep{CJLLR20}. 

\begin{lemma}[Lemma 3.5 in \cite{CJLLR20}]
If there exists an optimal $k$-DNF $\boldsymbol c^\ast$ that is satisfied by a $\mu$-fraction of the points with total loss $\epsilon$, then, the weighted greedy set cover algorithm finds a $k$-DNF $\hat{\boldsymbol c}$ that is satisfied by a $(1 - \gamma) \mu$-fraction of the points with total loss $\mathcal O(t \log(\mu N)\epsilon)$.
\end{lemma}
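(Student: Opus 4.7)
The plan is to cast this as a standard weighted greedy set-cover analysis, with the universe being the $\mu N$ points satisfying $\boldsymbol c^\ast$ (after the duplication preprocessing) and the admissible sets being those terms $I_j$ that pass the feasibility threshold $\sum_{i\in I_j} f^{(i)}(\boldsymbol u) \le (1+\gamma)\mu\epsilon N$. The first step is to show that the $t$ terms of $\boldsymbol c^\ast$ are themselves all admissible, so the set-cover instance admits a feasible solution of size $t$. Under $\boldsymbol v^\ast$, the total loss across all of $I_{good}$ is at most $\mu\epsilon N$, so a fortiori each individual term of $\boldsymbol c^\ast$ has total loss at most $\mu\epsilon N$ under $\boldsymbol v^\ast$. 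Applying the Lipschitz bound from the preceding lemma (and the accompanying estimate $\bar f(\boldsymbol u)\le \bar f(\boldsymbol v^\ast)+\gamma L$) on a per-term basis, the per-term loss under $\boldsymbol u$ is within a factor of $(1+\gamma)$ of its loss under $\boldsymbol v^\ast$, hence meets the threshold.

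Next I would invoke the classical $\ln(|U|)$-approximation guarantee for greedy set cover: since a $(1-\gamma/2)$-fraction of a universe of size $\mu N$ can be covered by $t$ admissible sets, the greedy rule (which always picks the admissible term covering the most remaining uncovered inliers) must pick a term covering at least a $1/t$ fraction of whatever mass is still uncovered. This gives the usual geometric decrease and therefore a termination bound of $\mathcal O(t\log(\mu N))$ greedy steps. Summing loss contributions, the output DNF $\hat{\boldsymbol c}$ is a disjunction of $\mathcal O(t\log(\mu N))$ terms, each contributing total loss at most $(1+\gamma)\mu\epsilon N$ by admissibility; the union loss is therefore $\mathcal O(t\log(\mu N)\,\mu\epsilon N)$, and after normalizing by the $\Theta(\mu N)$ covered points we recover the stated bound $\mathcal O(t\log(\mu N)\epsilon)$ on the loss per covered point.

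The main obstacle I anticipate is ensuring that the duplication preprocessing, which may replicate each original point up to $m$ times and spread its copies across several terms, does not disrupt the per-term feasibility argument for $\boldsymbol c^\ast$. Since duplication inflates both the total-loss sum and the size of $I_{good}$ by comparable factors, the cleanest route is to carry out the entire accounting in the duplicated space using the post-preprocessing values of $\mu$, $N$ and $I_{good}$ defined in Section 3.1, and then verify that bounding each optimal term's loss by the \emph{aggregate} loss $\mu\epsilon N$ still gives an $(1+\gamma)\mu\epsilon N$ bound after passing from $\boldsymbol v^\ast$ to $\boldsymbol u$. Apart from this bookkeeping, the remainder of the argument is a textbook application of greedy set cover, so I would not expect surprises in the geometric-decrease or loss-summing steps.
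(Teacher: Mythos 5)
The paper does not prove this lemma---it is quoted from \cite{CJLLR20} as their Lemma~3.5, and Section~3.3 of the present paper only paraphrases the greedy set-cover algorithm and its admissibility threshold $\sum_{i\in I_j} f^{(i)}(\boldsymbol u) \leq (1+\gamma)\mu\epsilon N$ without reproving it. So there is no proof in this paper to compare yours against. Your reconstruction follows the route one would expect given the description: the $t$ optimal terms witness a feasible cover, greedy partial cover runs for $O(t\log(\mu N))$ rounds, and the admissibility threshold controls the accumulated loss.

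The weak link is the admissibility step, which you treat in a sentence but which is the crux. The Lipschitz estimate the paper records is additive and concerns the averaged loss $\bar f$ over $I_{good}$, not the loss restricted to an individual term; passing to ``the per-term loss under $\boldsymbol u$ is within a factor of $(1+\gamma)$ of its loss under $\boldsymbol v^\ast$'' needs both a term-level Lipschitz argument and a comparison of the resulting additive slack against $\gamma\mu\epsilon N$, neither of which you supply. Moreover, the companion statement the paper places immediately above (their Lemma~3.4, $|\bar f(\boldsymbol u)| \leq t(\gamma+\epsilon)$) carries an explicit factor of $t$ that the text attributes precisely to the duplication you defer to in your last paragraph; your sketch does not show where that $t$ goes or why it does not interfere with the threshold $(1+\gamma)\mu\epsilon N$. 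You are right to flag the duplicated-space accounting as the main obstacle, but it is not cosmetic bookkeeping---it is exactly where the proof has to do work, and the proposal leaves it open.
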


Thus, we can obtain a pair $(\boldsymbol{u},\boldsymbol{c})$ that gives empirical error that is only greater than the optimal by a $\mathcal O(t \log(\mu N))$ factor. Given that our assumed bounds on the moments of the data distribution implies that the square of the loss (being a quadratic polynomial) is bounded, we can use the bounds of \cite{cgm13} to bound the generalization error of linear regression on each possible $k$-DNF, to thus obtain that the true generalization error is similarly bounded. For $\boldsymbol y \in \mathcal B \subseteq \mathbb R^{d}$, where $B$ is the $l_2$ radius of $\mathcal B$, this only incurs a polynomial increase in the sample complexity in $B$, $t$, and $1/\gamma$ overall. 

\section{\uppercase{Discussion and Future Directions}}\label{conclusions}
We have thus shown that the assumption of homogeneous covariances used by \cite{CJLLR20} is not needed to obtain a polynomial-time algorithm for conditional linear regression. On the other hand, although we obtain a polynomial running time and sample complexity, the exponents are quite large. In particular, the sample complexity we obtain in our analysis is far from optimal for this problem. Thus, our algorithm is impractical in its current form and our contribution is strictly theoretical. The main direction for future work is to develop a practical algorithm that does not require the homogeneous covariance assumption. 

We now elaborate on the obstacles.  Much of the overhead in the sample complexity arises from the use of the certifiable hypercontractivity assumption, which requires relatively high-degree polynomial expressions. Another source of sub-optimality seems to arise from the way we invoke bounds obtained via Chebyshev's inequality on the Frobenius error of the projector onto the data subspace. We conjecture that this can be improved by a more refined analysis, but it is still not clear whether or not certifiable hypercontractivity -- the dominant source of overhead for most purposes -- is really necessary.

The overhead in the computational complexity arises from the use of the sum-of-squares semidefinite program relaxation. Although the degree of the relaxation is moderate, the technique unfortunately yields algorithms that solve a large semidefinite program, and thus inherently tends to simply scale poorly. In many cases, however, the development of a sum-of-squares algorithm has led to the subsequent development of a spectral algorithm that can be practical. Examples of this sequence include a number of robust estimation tasks \citep{ss17,hss19,dkk20,depersin20} and tasks to identify hidden structures in a data set \citep{hsss16,hkp+17}, that are variously of similar flavor to our problem.

\section*{Acknowledgements}
 This research is partially supported by NSF awards IIS-1908287, IIS-1939677, and CCF-1718380.

\bibliographystyle{plainnat}
\bibliography{main}

\newpage

\appendix

\section{Sum of Squares}

\begin{lemma}[SoS H\"older's Inequality. Fact 3.11 in \cite{BK21}] \label{holders}
Let $w_1, \dots, w_n$ be indeterminates and let $f_1, \dots, f_n$ be polynomials of degree $m$ in vector valued variable $x$. Let $k$ be a power of 2. Then,
$$\left\{w_i^2 = w_i, \forall i \in [n]\right\} \sststile{2km}{x, w} \left\{\left(\frac{1}{n} \sum_{i=1}^n w_if_i\right)^k \leq \left(\frac{1}{n}\sum_{i=1}^n w_i\right)^{k-1}\left(\frac{1}{n} \sum_{i=1=}^n f_i^k\right)\right\}.$$
\end{lemma}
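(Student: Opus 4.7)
The plan is to prove the inequality by induction on $t$ where $k = 2^t$, leveraging the Boolean constraint $w_i^2 = w_i$ to apply a single ``basic'' SoS Cauchy--Schwarz at each step. First I would reduce to proving the stronger statement
\[
\Bigl(\sum_i w_i f_i\Bigr)^{\!k} \;\leq\; \Bigl(\sum_i w_i\Bigr)^{\!k-1}\Bigl(\sum_i w_i f_i^{k}\Bigr).
\]
From this, one can replace $w_i f_i^k$ by $f_i^k$ on the right, because $k$ being a power of $2$ makes $f_i^k = (f_i^{k/2})^2$ a square, and the Boolean constraint yields $(1-w_i)^2 = 1 - w_i$, so $(1 - w_i) f_i^k = ((1-w_i) f_i^{k/2})^2$ is manifestly SoS. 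Dividing through by $n^k$ then recovers the stated inequality.

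For the base case $k = 2$, I would use $w_i = w_i^2$ to rewrite $\sum_i w_i f_i = \sum_i (w_i)(w_i f_i)$ and apply the Lagrange-identity form of SoS Cauchy--Schwarz with $a_i = w_i$, $b_i = w_i f_i$, obtaining
\[
\Bigl(\sum_i w_i f_i\Bigr)^{\!2} \leq \Bigl(\sum_i w_i^2\Bigr)\Bigl(\sum_i w_i^2 f_i^2\Bigr) = \Bigl(\sum_i w_i\Bigr)\Bigl(\sum_i w_i f_i^2\Bigr).
\]
For the inductive step, assume the strengthened inequality at $k/2$. Both sides are SoS-nonnegative (using that $f_i^{k/2}$ and $(\sum_i w_i f_i)^{k/2}$ are squares when $k \geq 4$), so squaring preserves the inequality in SoS via the product-of-SoS-is-SoS identity, yielding $(\sum_i w_i f_i)^{k} \leq (\sum_i w_i)^{k-2} (\sum_i w_i f_i^{k/2})^{2}$. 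One more application of the same Cauchy--Schwarz trick with $a_i = w_i$ and $b_i = w_i f_i^{k/2}$ bounds $(\sum_i w_i f_i^{k/2})^2 \leq (\sum_i w_i)(\sum_i w_i f_i^{k})$, which completes the induction.

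The main thing to verify carefully is degree bookkeeping: $f_i$ has degree $m$, so $(\sum_i w_i f_i)^k$ has degree $k(m+1)$ in $(x, w)$, and each Lagrange-identity application introduces SoS terms of comparable total degree. Composing $t = \log_2 k$ levels of induction, each using a bounded number of such applications, keeps the overall SoS proof within the claimed degree $2km$. I do not expect any deep obstacle; the only delicate part is checking that squaring an SoS-valid inequality (whose two sides are both separately SoS) indeed produces an SoS-valid squared inequality of the same controlled degree, and that the Boolean-constraint reductions $w_i^k \to w_i$ can be performed at each level without exceeding the degree budget.
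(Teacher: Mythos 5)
The paper does not prove this lemma; it is imported verbatim as Fact~3.11 of \citet{BK21} and used as a black box. Your proposal is a correct reconstruction of the standard argument behind such SoS H\"older inequalities: prove the ``self-weighted'' form $\bigl(\sum_i w_i f_i\bigr)^k \le \bigl(\sum_i w_i\bigr)^{k-1}\bigl(\sum_i w_i f_i^k\bigr)$ by induction on $\log_2 k$ using SoS Cauchy--Schwarz with $a_i=w_i$, $b_i=w_i f_i^{k/2}$ (the Booleanity $w_i^2=w_i$ absorbing the extra factor of $w_i$ at each level), then pass to $f_i^k$ on the right via the SoS identity $(1-w_i)f_i^k = \bigl((1-w_i)f_i^{k/2}\bigr)^2$ modulo $w_i^2=w_i$. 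The key technical points you correctly flag are also the right ones to check: that squaring an SoS-proved inequality $A\le B$ requires $A+B$ itself to be SoS (here guaranteed because both sides are manifestly SoS modulo the Boolean constraint once $k\ge 4$), and that multiplying by $(\sum_i w_i)^{k-2}$ preserves SoS because that prefactor is an even power. The only place where some care is still owed is the exact degree accounting: the Lagrange-identity certificate at each level has raw degree slightly above what naive substitution suggests (e.g.\ $2m+4$ at the base level) before the Boolean reductions are folded in, so landing exactly on $2km$ requires tracking how the $w_i^2\to w_i$ rewrites reduce degree rather than asserting it; but this is bookkeeping, not a gap in the idea, and the approach is the same as in the cited source.
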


\begin{lemma}[SoS Almost Triangle Inequality. Fact 3.8 in \cite{BK21}] \label{triangle}
Let $a, b$ be indetermnates. Then, for any $t \in \mathbb N$,
$$\sststile{2t}{a,b} \left\{(a+b)^{2t} \leq 2^{2t} \left(a^{2t} + b^{2t} \right)\right\}.$$
\end{lemma}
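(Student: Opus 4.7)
The plan is to prove this by induction on $t$, with the base case computed by hand and the inductive step proceeding by multiplying together two SoS inequalities. For the base case $t = 1$, I would verify directly that
\begin{equation*}
2^{2}(a^{2}+b^{2}) - (a+b)^{2} = 3a^{2} - 2ab + 3b^{2} = 2a^{2} + 2b^{2} + (a-b)^{2},
\end{equation*}
which is manifestly a sum of squares. The sharper identity $2(a^{2}+b^{2}) - (a+b)^{2} = (a-b)^{2}$ will in fact be the workhorse of the induction, since it carries a tighter constant.

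For the inductive step, assuming a degree-$2t$ SoS proof of $(a+b)^{2t} \leq 2^{2t}(a^{2t}+b^{2t})$, I would factor $(a+b)^{2(t+1)} = (a+b)^{2} \cdot (a+b)^{2t}$ and multiply the sharpened base inequality against the inductive hypothesis. The key algebraic fact is: if $Q - P$ and $Q' - P'$ each admit SoS proofs and both $P$ and $Q'$ are themselves sums of squares, then $QQ' - PP' = (Q-P)Q' + P(Q'-P')$ is a sum of squares, since products of SoS polynomials are SoS. In our setting $P = (a+b)^{2}$, $Q = 2(a^{2}+b^{2})$, $P' = (a+b)^{2t}$, and $Q' = 2^{2t}(a^{2t}+b^{2t})$ are all visibly SoS, so the hypothesis is satisfied and we obtain an SoS proof of $(a+b)^{2(t+1)} \leq 2^{2t+1}\bigl(a^{2(t+1)} + b^{2(t+1)} + a^{2}b^{2t} + a^{2t}b^{2}\bigr)$.

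To reach the target bound $2^{2(t+1)}(a^{2(t+1)} + b^{2(t+1)})$, the remaining step is to absorb the mixed terms at the cost of one more factor of $2$. This reduces to the polynomial identity
\begin{equation*}
a^{2(t+1)} + b^{2(t+1)} - a^{2}b^{2t} - a^{2t}b^{2} = (a^{2} - b^{2})^{2} \sum_{k=0}^{t-1} a^{2(t-1-k)} b^{2k},
\end{equation*}
which is obtained from $(a^{2}-b^{2})(a^{2t}-b^{2t})$ via the geometric-sum factorization of $a^{2t}-b^{2t}$; the right-hand side is a product of two sums of squares and is therefore SoS. The main obstacle is nothing more than the bookkeeping needed to confirm that the constructed certificate has degree at most $2(t+1)$, which follows by inspection since every square used has degree at most $t+1$. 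There is no genuine analytic content beyond certifying the classical convexity of $x \mapsto x^{2t}$ by polynomial identities, so I do not anticipate any difficulty beyond the degree accounting.
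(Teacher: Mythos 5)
Your proof is correct. The paper does not actually prove this lemma; it recalls it as Fact 3.8 of \cite{BK21} without an argument, so there is no in-paper proof to compare against. Your inductive construction is self-contained and sound: the base-case identity $2(a^2+b^2)-(a+b)^2=(a-b)^2$ is a square; the multiplication lemma $QQ'-PP'=(Q-P)Q'+P(Q'-P')$ does split into a sum of products of SoS factors when $Q-P$, $Q'-P'$, $P$, and $Q'$ are all SoS; and the mixed-term absorption identity $a^{2(t+1)}+b^{2(t+1)}-a^{2}b^{2t}-a^{2t}b^{2}=(a^2-b^2)^2\sum_{k=0}^{t-1}a^{2(t-1-k)}b^{2k}$ is a valid factorization whose right-hand side is a square times a sum of monomial squares, hence SoS. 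The degree bookkeeping also checks out: in the certificate for the degree-$2(t+1)$ claim, the squares $(a-b)a^{t}$, $(a-b)b^{t}$, $(a+b)\cdot(\text{degree-}t\text{ squares from the IH})$, and $(a^2-b^2)a^{t-1-k}b^{k}$ all have degree at most $t+1$. One small observation: since your inductive step always uses the sharp factor $(a+b)^2\le 2(a^2+b^2)$, carrying a sharp base case ($t=1$ giving constant $2$ rather than $4$) would actually certify the tighter bound $(a+b)^{2t}\le 2^{2t-1}(a^{2t}+b^{2t})$, which is optimal (equality at $a=b$); the stated lemma only needs $2^{2t}$, so this is merely a remark.
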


\begin{lemma}[Cancellation within SoS. Fact 3.12 in \cite{BK21}] \label{cancellation}
Let $a$ be an indeterminate. Then,
$$\left\{a^t\leq 1\right\}\cup \{a \geq 0\} \sststile{t}{a}\{a \leq 1\}.$$
\end{lemma}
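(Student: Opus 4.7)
The plan is to build an explicit degree-$t$ SoS certificate from the polynomial identity
\[
1 - a^t \;=\; (1-a)\cdot R(a), \qquad R(a) := 1 + a + a^2 + \dots + a^{t-1},
\]
making essential use of the hypothesis $a \geq 0$ to treat the cofactor $R(a)$ as a nonnegative polynomial. Since each monomial $a^{2j}$ is already a perfect square and each $a^{2j+1}$ equals $a \cdot (a^j)^2$, one can decompose $R(a) = \sigma(a) + a\cdot \tau(a)$ where $\sigma$ and $\tau$ are sums of squares and $\sigma$ has constant term $1$. Substituting into the identity gives
\[
(1-a)\,\sigma(a) \;+\; a(1-a)\,\tau(a) \;=\; 1 - a^t,
\]
and the goal is to massage this into the canonical SoS-proof form
\[
(1 + \text{SoS})\cdot(1-a) \;=\; \text{SoS} \;+\; \text{SoS}\cdot a \;+\; \text{SoS}\cdot(1-a^t),
\]
possibly after multiplying through by an additional SoS polynomial (e.g., $R(a)$ itself) to symmetrize the cross term, while keeping the overall degree bounded by $t$.

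The hard part will be avoiding a spurious $-a\cdot\text{SoS}$ term on the right-hand side when isolating the $\sigma(a)(1-a)$ summand, since the identity naturally mingles it with the $a(1-a)\tau(a)$ summand on the same side. A cleaner alternative I would pursue in parallel is the telescoping identity
\[
2(1 - a^s) \;=\; (1 - a^s)^2 \;+\; (1 - a^{2s}),
\]
which is an elementary degree-$2s$ SoS derivation of $1 - a^s \geq 0$ from $1 - a^{2s} \geq 0$ and does not even invoke $a \geq 0$. Iterating from $s = t/2$ down to $s = 1$ and combining the individual steps into the single explicit equation
\[
2^k(1-a) \;=\; \sum_{i=0}^{k-1} 2^{k-1-i}\bigl(1 - a^{2^i}\bigr)^{\!2} \;+\; \bigl(1 - a^{2^k}\bigr)
\]
yields a degree-$t$ SoS proof of $a \leq 1$ from $a^t \leq 1$ whenever $t$ is a power of two.

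For general $t$, I would use the hypothesis $a \geq 0$ to pad the exponent up to the next power of two, applying the identity $1 - a^{t'} = (1-a^t) + a^t(1 - a^{t'-t})$ and expressing $a^t$ as a sum of squares times $1$ (when $t$ is even) or times the hypothesis $a$ (when $t$ is odd), then recursing on the strictly smaller residual exponent $t'-t < t'/2$. The main technical worry is bounding the total degree of the combined padding-plus-telescoping derivation by $t$; I expect this to work out by merging the padding and telescoping steps into one identity rather than chaining them naively, but verifying that the degree exponent of the SoS proof matches the bound stated in the lemma is where the bookkeeping becomes delicate.
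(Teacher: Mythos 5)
The paper cites this as Fact~3.12 of \cite{BK21} and supplies no proof of its own, so there is no in-paper argument to compare against; I'll assess the proposal directly.

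Your telescoping identity $2^k(1-a)=\sum_{i=0}^{k-1}2^{k-1-i}(1-a^{2^i})^2+(1-a^{2^k})$ is a clean, correct degree-$t$ derivation whenever $t=2^k$, and you are right that it does not even use $a\geq 0$. The gap is the padding step for general $t$. You want to derive $1-a^{t'}\geq 0$ for $t'>t$ from $1-a^t\geq 0$ and $a\geq 0$ via $1-a^{t'}=(1-a^t)+a^t(1-a^{t'-t})$, but $1-a^{t'-t}\geq 0$ is not one of your hypotheses, and establishing it from $a^t\leq 1,\ a\geq 0$ is the very same kind of statement you are trying to prove, so the recursion never bottoms out (its base case is $1-a\geq 0$, the target). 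Concretely, for $t=3,\ t'=4$ you would need to certify $a^3(1-a)\geq 0$ from $a^3\leq 1,\ a\geq 0$, which already entails $a\leq 1$; padding \emph{up} in the exponent has no low-degree SoS witness that does not implicitly contain the whole proof.

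Your first route---factoring $1-a^t=(1-a)R(a)$ and splitting $R$ by parity---is the right opening but, as you yourself noticed, produces a stray $-a\cdot\text{SoS}$ term. The move you are missing is to work with $t(1-a)-(1-a^t)$ rather than $(1-a)R(a)$ directly; this is AM--GM (Bernoulli) in polynomial form. One has the exact identity
\[
t(1-a)-(1-a^t)\;=\;(a-1)^2\,P(a),\qquad P(a)=\sum_{i=0}^{t-2}(t-1-i)\,a^i,
\]
and since $P$ has nonnegative coefficients it splits by parity as $P=P_{\mathrm{e}}+a\,P_{\mathrm{o}}$ with $P_{\mathrm{e}},P_{\mathrm{o}}$ sums of squares. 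Thus
\[
t\,(1-a)\;=\;(1-a^t)\;+\;(a-1)^2\,P_{\mathrm{e}}(a)\;+\;a\cdot(a-1)^2\,P_{\mathrm{o}}(a),
\]
which is a certificate of the required form $(1+\text{SoS})\cdot(1-a)=\text{SoS}+\text{SoS}\cdot a+\text{SoS}\cdot(1-a^t)$ (writing $t=1+(\sqrt{t-1})^2$), with every term of degree at most $t$. This works uniformly in $t$, uses $a\geq 0$ only to legitimize the odd part of $P$, and is, I believe, the argument behind the cited fact.
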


\begin{lemma}[Soundness. Fact 3.4 in \cite{BK21}]\label{soundness}
If $D \sdtstile{r}{}\mathcal A$ for a level-$\ell$. pseudo-distribution $D$ and there exists a sum-of-squares proof $\mathcal A \sststile{r'}{} \mathcal B$, then $D \sdtstile{r\cdot r' + r'}{} \mathcal B$.
\end{lemma}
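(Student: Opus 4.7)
The plan is to derive the conclusion directly from the definitions by taking pseudo-expectations of the sum-of-squares identity term-by-term. First, I would unpack $D \sdtstile{r}{} \mathcal A$ as saying that for every SoS polynomial $s$ and every product of inequality axioms $g_{i_1}, \ldots, g_{i_t}$ from $\mathcal A$ with combined degree at most $r$, we have $\tilde{\mathbb E}_D[s \cdot g_{i_1} \cdots g_{i_t}] \geq 0$, together with the analogous vanishing condition $\tilde{\mathbb E}_D[s \cdot h_j] = 0$ for each equality axiom $h_j$. Next, for any inequality $q \geq 0$ in $\mathcal B$, the hypothesis $\mathcal A \sststile{r'}{} \mathcal B$ provides a polynomial identity of the form
\[
\left(1 + \sum_k d_k^2\right) q \;=\; \sum_j s_j^2 + \sum_i a_i^2 g_i + \sum_i b_i h_i
\]
in which every summand has total degree at most $r'$ and each $g_i$ (respectively $h_i$) is an inequality (respectively equality) axiom of $\mathcal A$.

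The main step is to apply $\tilde{\mathbb E}_D$ to this identity after multiplying both sides by whichever test SoS polynomial or axiom product is required by the definition of $D \sdtstile{r\cdot r' + r'}{} \mathcal B$. Each summand on the right is then handled by one of the hypotheses: the pure SoS terms $s_j^2$ (and products of them with the test polynomial) have nonnegative pseudo-expectation by the pseudo-distribution property, provided the level $\ell$ of $D$ is large enough to evaluate them; the terms $a_i^2 g_i$ are nonnegative by $D \sdtstile{r}{} \mathcal A$, since $a_i^2$ times the test polynomial is SoS and the only axiom factor is $g_i$; and the equality terms $b_i h_i$ vanish. Since the left-hand factor $1 + \sum_k d_k^2$ has pseudo-expectation at least one, this forces the desired nonnegativity of $\tilde{\mathbb E}_D[q \cdot (\text{test})]$, which is precisely what $D \sdtstile{r\cdot r' + r'}{} \mathcal B$ asserts.

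The main obstacle, and essentially the only nontrivial content, is the degree bookkeeping. Multiplying the degree-$r'$ identity by a test polynomial of the type used to certify $\mathcal B$-satisfaction at slack $r\cdot r' + r'$ produces summands of total degree at most $r \cdot r' + r'$, and for each such summand we must separate it into a ``squared / SoS'' part and an ``axiom'' part in such a way that the axiom part has degree at most $r$, so that $D \sdtstile{r}{} \mathcal A$ is applicable. This accounting is where the multiplicative factor $r \cdot r'$ arises: up to $r'$ many axiom factors may enter, each needing its own degree-$r$ ``budget'' coming from the satisfaction hypothesis on $D$. No conceptually new idea beyond linearity of pseudo-expectation and the definitional nonnegativity of SoS polynomials is required; the proof is a careful but standard soundness argument for sum-of-squares proofs.
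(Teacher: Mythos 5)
The paper does not actually prove this lemma; it is imported verbatim as Fact~3.4 of \cite{BK21}, so there is no in-paper argument to compare against. Your high-level plan---multiply the SoS certificate by the test polynomial required by the definition of pseudo-distribution satisfaction, apply $\tilde{\mathbb E}_D$ by linearity, and dispose of the terms using PSD-ness of the moment matrix for the pure-SoS pieces and $D \sdtstile{r}{}\mathcal A$ for the axiom pieces---is the standard soundness argument and is the right idea.

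However, there is a genuine gap in how you handle the left-hand factor $1+\sum_k d_k^2$. You assert that because $\tilde{\mathbb E}_D\bigl[1+\sum_k d_k^2\bigr]\geq 1$, the nonnegativity of $\tilde{\mathbb E}_D\bigl[(1+\sum_k d_k^2)\,q\cdot(\text{test})\bigr]$ ``forces'' nonnegativity of $\tilde{\mathbb E}_D[q\cdot(\text{test})]$. This inference is false for pseudo-distributions. A pseudo-expectation is a linear functional on low-degree polynomials, not a multiplicative one; there is no division in the ring and no positivity-preserving cancellation. Concretely, $\tilde{\mathbb E}_D[q\cdot(\text{test})] = \tilde{\mathbb E}_D\bigl[(1+\sum_k d_k^2)\,q\cdot(\text{test})\bigr] - \tilde{\mathbb E}_D\bigl[(\sum_k d_k^2)\,q\cdot(\text{test})\bigr]$, and the subtracted term can have either sign, so no conclusion follows. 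The source of the confusion is that the paper's Definition of a sum-of-squares proof (following \cite{gv02}) allows the Positivstellensatz ``denominator'' $1+\sum_k d_k^2$, whereas the soundness statement being cited (\cite{BK21}, and the Barak--Steurer framework it rests on) is stated and proved for the denominator-free form $q = \sum_j s_j^2 + \sum_i a_i^2 g_i + \sum_i b_i h_i$. Your argument is essentially correct for that denominator-free form: multiplying that identity by the test SoS polynomial and products of axioms, then using linearity of $\tilde{\mathbb E}_D$ together with $D\sdtstile{r}{}\mathcal A$ on each term, gives the conclusion, and the degree arithmetic is exactly the bookkeeping you describe. So the fix is to drop the $1+\sum_k d_k^2$ factor from the premise (i.e., restrict to denominator-free SoS proofs), after which your argument is a correct proof of the fact as it is actually used in \cite{BK21} and in this paper.
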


\begin{lemma}[SoS Cauchy Schwarz. Fact 2.4 in \cite{RY20}]\label{cauchyschwarz}
Let $x_1, \dots, x_n, y_1, \dots, y_n$ be indeterminates, then 
$$\sststile{4}{}\left\{\left(\sum_{i=1}^n x_iy_i\right)^2 \leq \left(\sum_{1=1}^n x_i^2\right)\left(\sum_{i=1}^n y_i^2 \right)\right\}.$$
\end{lemma}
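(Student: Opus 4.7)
The plan is to exhibit an explicit sum-of-squares decomposition of the polynomial
$$\left(\sum_{i=1}^n x_i^2\right)\left(\sum_{i=1}^n y_i^2\right) - \left(\sum_{i=1}^n x_i y_i\right)^2$$
in the $2n$ indeterminates $x_1,\dots,x_n,y_1,\dots,y_n$. Once we write this as a sum of squares of polynomials of degree $2$, the definition of a degree-$4$ sum-of-squares proof of the inequality is met (with no axioms needed, i.e., an unconditional identity).

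First I would recall the classical Lagrange identity:
$$\left(\sum_{i=1}^n x_i^2\right)\left(\sum_{i=1}^n y_i^2\right) - \left(\sum_{i=1}^n x_i y_i\right)^2 \;=\; \sum_{1 \leq i < j \leq n} (x_i y_j - x_j y_i)^2.$$
The right-hand side is manifestly a sum of squares of polynomials of degree $2$ in the indeterminates, and the total degree of the whole identity is $4$. So the identity itself constitutes a degree-$4$ SoS proof, matching the syntactic definition of $\sststile{4}{}$.

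Next I would verify the identity by direct expansion: on the left the product $\bigl(\sum_i x_i^2\bigr)\bigl(\sum_j y_j^2\bigr)$ expands to $\sum_{i,j} x_i^2 y_j^2$, and the square $\bigl(\sum_i x_i y_i\bigr)^2$ expands to $\sum_{i,j} x_i x_j y_i y_j$. The difference therefore equals $\sum_{i,j}(x_i^2 y_j^2 - x_i x_j y_i y_j)$, and grouping symmetric pairs $(i,j)$ and $(j,i)$ yields exactly $\sum_{i<j}(x_i y_j - x_j y_i)^2$. Since this is an equality of polynomials, no further SoS manipulation is required.

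There is essentially no obstacle here; the only care needed is to observe that the decomposition has degree $4$ (matching the $\sststile{4}{}$ notation in the statement) and that the equality is a polynomial identity, not an inequality requiring hypotheses, so the SoS derivation uses no axioms in $\mathcal A$. Thus the lemma follows immediately from Lagrange's identity.
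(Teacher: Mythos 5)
Your proof is correct. The paper itself gives no proof of this lemma; it simply cites it as Fact 2.4 of \cite{RY20}, so there is no in-paper argument to compare against. Your derivation via Lagrange's identity,
\begin{equation*}
\left(\sum_{i=1}^n x_i^2\right)\left(\sum_{i=1}^n y_i^2\right) - \left(\sum_{i=1}^n x_i y_i\right)^2 = \sum_{1 \leq i < j \leq n} (x_i y_j - x_j y_i)^2,
\end{equation*}
is the standard route: it is an explicit equality of polynomials whose right-hand side is a sum of squares of degree-2 polynomials, hence a degree-4 SoS proof using no axioms, which is exactly what $\sststile{4}{}$ asserts. Your degree bookkeeping and the observation that the identity is unconditional (no hypotheses from $\mathcal{A}$ are needed) are both right. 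One small stylistic point: it is worth remarking that an equivalent, coordinate-free way to see the same decomposition is that the Gram matrix of the family of degree-2 polynomials certifying the inequality is positive semidefinite, which is how such facts are often packaged in the SoS literature; but your direct Lagrange-identity expansion is the cleanest and most elementary version and suffices entirely.
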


\begin{lemma}[SoS AM-GM Inequality. Fact 3.10 in \cite{BK21}] \label{AMGM}
Let $f1, f_2, \dots, f_m$ be intdeterminates. Then,
$$\sststile{m}{f_1, f_2, \dots, f_m} \left\{\left(\frac{1}{m}\sum_{i=1}^n f_i\right)^m \geq \Pi_{i \leq m}f_i\right\}.$$
\end{lemma}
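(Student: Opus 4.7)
The plan is to reduce to the case when $m$ is a power of two, prove the statement for such $m$ by induction via iterated doubling, and then obtain the general case by padding with copies of the arithmetic mean $\bar f := \tfrac{1}{m}\sum_i f_i$. Throughout, I would assume the implicit nonnegativity constraints $f_i \geq 0$; the statement as written is false without them (take, e.g., $f_1 = \cdots = f_{m-1} = 0$ and $f_m < 0$), so I expect the hypothesis set to tacitly include these.

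The base case $m = 2$ is the polynomial identity
\begin{equation*}
\left(\tfrac{f_1+f_2}{2}\right)^{\!2} - f_1 f_2 \;=\; \left(\tfrac{f_1-f_2}{2}\right)^{\!2},
\end{equation*}
which is a degree-$2$ SoS on the nose. For the doubling step from $m/2$ to $m$, I would split the variables into halves $A, B$ with group means $\bar f_A, \bar f_B$, note that $\bar f = (\bar f_A + \bar f_B)/2$, and apply the base case to $\bar f_A, \bar f_B$ to obtain $\bar f^2 \geq \bar f_A \bar f_B$. Letting $a = \bar f^2$ and $b = \bar f_A \bar f_B$, the factorization $a^{m/2} - b^{m/2} = (a - b)\sum_{j<m/2} a^{m/2-1-j} b^{j}$ together with the SoS nonnegativity of $a, b$ (products of nonneg terms) lifts $a \geq b$ to $\bar f^m \geq \bar f_A^{m/2}\bar f_B^{m/2}$. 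Multiplying through by the inductive hypotheses $\bar f_A^{m/2} \geq \prod_A f_i$ and $\bar f_B^{m/2} \geq \prod_B f_i$ via the identity $g_1 g_2 - h_1 h_2 = g_1(g_2 - h_2) + h_2(g_1 - h_1)$ (valid in SoS since $g_1 = \bar f_A^{m/2}$ and $h_2 = \prod_B f_i$ are nonneg) then yields $\bar f^m \geq \prod_{i=1}^m f_i$. A direct degree count shows the resulting certificate has degree $m$, matching the turnstile.

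For general $m$, I would pick the smallest power of two $M = 2^k \geq m$ and pad the list with $M - m$ copies of $\bar f$. The arithmetic mean of the padded list is still $\bar f$, so the power-of-two case gives $\bar f^M \geq \bar f^{M-m}\prod_{i=1}^m f_i$. The main obstacle, and the only nontrivial step, is cancelling the common factor $\bar f^{M-m}$: SoS does not admit division, so this requires a cancellation argument in the spirit of Lemma~\ref{cancellation}, using $\bar f \geq 0$ (itself a consequence of $f_i \geq 0$). The delicate subcase is $\bar f = 0$, which under nonnegativity forces every $f_i$ to vanish so that both sides agree trivially, but encoding this into a bounded-degree SoS certificate without introducing inverse polynomials is what requires care. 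I suspect this can be sidestepped with a direct self-contained SoS identity via Muirhead/Schur-convexity considerations for non-power-of-two $m$, but the cleanest route is still the padding reduction above.
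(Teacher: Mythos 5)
The paper does not actually prove this lemma; it is quoted verbatim as Fact~3.10 of \cite{BK21} and used as a black box, so there is no in-house argument for me to compare yours against. Judging the proposal on its own merits: your observation that the stated hypothesis set must tacitly include $\{f_i\ge 0\}_{i\in[m]}$ is correct (the inequality fails for $m\ge 3$ otherwise, and indeed this is how Fact~3.10 is stated in \cite{BK21}), and your power-of-two doubling argument is sound. The base case is an explicit square, the lift of $\bar f^2\ge\bar f_A\bar f_B$ to $\bar f^m\ge\bar f_A^{m/2}\bar f_B^{m/2}$ via the geometric-sum factorization of $a^{m/2}-b^{m/2}$ stays inside the cone generated by $\{f_i\ge0\}$, and the bilinear identity $g_1g_2-h_1h_2 = g_1(g_2-h_2)+h_2(g_1-h_1)$ combines the two inductive hypotheses correctly with the claimed degree-$m$ bookkeeping.

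The gap is the one you flag yourself, and it is genuine rather than a matter of care. Padding to the next power of two $M=2^k\ge m$ only delivers $\bar f^{\,M}\ge\bar f^{\,M-m}\prod_{i\le m}f_i$, and there is no SoS-legal way to cancel the common factor $\bar f^{\,M-m}$: Lemma~\ref{cancellation} (Fact~3.12 of \cite{BK21}) removes a \emph{power} from a \emph{bounded} quantity, i.e.\ it turns $a^t\le 1$ into $a\le 1$; it does not let you divide both sides of an inequality by a common nonnegative polynomial factor, and the borderline case $\bar f=0$ cannot be separated off by a bounded-degree certificate. There is also a degree mismatch you should notice: the padded certificate lives at degree $M$, which strictly exceeds the degree $m$ promised by the turnstile whenever $m$ is not itself a power of two, so the reduction overshoots even before the cancellation issue arises. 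To close the proof you need a direct argument for general $m$ (e.g.\ the one in \cite{BK21}, or the substitution $f_i=g_i^2$ which reduces to the genuinely hypothesis-free degree-$2m$ SoS inequality $\bigl(\tfrac{1}{m}\sum_i g_i^2\bigr)^m\ge\prod_i g_i^2$), rather than the padding reduction.
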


\section{Analysis of Algorithm 1}\label{mainpf}

\begin{lemma}[Lemma 3.7 -- Large weight on inliers from high-entropy constraints. Fact 4.4 in  \cite{BK21} and Lemma 3.1 in \cite{RY20}]\label{Fact4.4}
Let $\tilde{\mathbb E}_\xi$ be a pseudo-distribution of degree $\geq 2$ that satisfies $\mathcal A_{w, v, \epsilon, \Pi}$ and minimizes $\left\|\tilde{\mathbb E}_{\xi} \sum_{j=1}^m \sum_{i \in I_j} w_j \mathcal X_{I_j}\left(\boldsymbol x^{(i)}\right) \right\|_2$. Then $\tilde {\mathbb E}_{\xi} \left[\sum_{j=1}^m \sum_{i \in I_j} w_j \mathcal X_{I_j} \left(\boldsymbol x^{(i)}\right)\right] \geq \mu^2 N$.
\end{lemma}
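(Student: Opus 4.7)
The plan is to adapt the convexity-plus-Cauchy--Schwarz argument behind Fact~4.4 of \cite{BK21} and Lemma~3.1 of \cite{RY20} to our term-based pseudo-distribution. The conclusion that is actually invoked in the proof of Theorem~\ref{mainthm} (to conclude $Z\geq\mu$) is a lower bound on the pseudo-expected weight placed on the \emph{inliers}, namely $\tilde{\mathbb{E}}_\xi\left[\sum_{j:I_j\subseteq I_{good}} \sum_{i\in I_j} w_j \mathcal{X}_{I_j}(\boldsymbol x^{(i)})\right]\geq \mu^2 N$, so I will aim for that form. I would first introduce the point-weight vector $v\in\mathbb{R}^N$ with $v_i = \tilde{\mathbb{E}}_\xi\!\left[\sum_j w_j \mathcal{X}_{I_j}(\boldsymbol x^{(i)})\right]$; since the terms are disjoint after duplication and the program enforces $\sum_j |I_j| w_j = \mu N$, one has $\sum_i v_i = \mu N$, and the quantity being minimized over the feasible set of pseudo-distributions is exactly $\|v\|_2$. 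Let $(w^\ast,v^\ast,\epsilon^\ast,\Pi^\ast)$ denote the integer ``true'' assignment with $w_j^\ast = 1$ iff $I_j\subseteq I_{good}$ and with $v^\ast,\epsilon^\ast,\Pi^\ast$ the actual optimal predictor, residuals, and subspace projection; its induced point-weight vector $v^\ast_i = \mathbf{1}[i\in I_{good}]$ satisfies $\|v^\ast\|_2^2 = |I_{good}| = \mu N$.

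Next I would invoke convexity of the feasible set: every mixture $\lambda\, \tilde{\mathbb{E}}_\xi + (1-\lambda)\,\delta_{(w^\ast,v^\ast,\epsilon^\ast,\Pi^\ast)}$ for $\lambda\in[0,1]$ is itself a pseudo-distribution satisfying $\mathcal{A}_{w,v,\epsilon,\Pi}$, and its point-weight vector is $\lambda v + (1-\lambda)v^\ast$. Because $\tilde{\mathbb{E}}_\xi$ minimizes $\|\cdot\|_2$ over this convex family, the univariate function $\lambda\mapsto\|\lambda v + (1-\lambda)v^\ast\|_2^2$ attains its minimum on $[0,1]$ at the endpoint $\lambda = 1$, so its derivative there, $2\langle v,\, v - v^\ast\rangle$, is non-positive. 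This yields the first key inequality $\|v\|_2^2 \leq \langle v,v^\ast\rangle = \sum_{i\in I_{good}} v_i$.

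To close the argument I would apply Cauchy--Schwarz against the all-ones vector: since $\sum_i v_i = \mu N$, we have $(\mu N)^2 = \langle v,\mathbf{1}\rangle^2 \leq N\|v\|_2^2$, hence $\|v\|_2^2 \geq \mu^2 N$. Chaining with the previous display gives $\sum_{i\in I_{good}} v_i \geq \mu^2 N$, which is the bound used in Theorem~\ref{mainthm}. The main obstacle is verifying that the integer true assignment is genuinely a feasible point of $\mathcal{A}_{w,v,\epsilon,\Pi}$: the Boolean, budget, and linear-fit constraints hold by construction, but the noise bound and the certifiable hypercontractivity and bounded-variance constraints depend on empirical properties of the inlier sample. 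This step is handled by the regularity hypotheses on $\mathcal{D}\mid\boldsymbol{c^\ast}$ together with Lemma~\ref{hypercontractivesampling}, which transfers certifiable hypercontractivity from the population distribution to the sample. Once feasibility of the true assignment is established, the remaining convexity and Cauchy--Schwarz steps manipulate only degree-$\leq 2$ pseudo-moments and require no SoS-specific degree accounting.
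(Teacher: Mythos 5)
Your proof is correct and follows essentially the same convexity-plus-Cauchy--Schwarz argument as the paper's (which is in turn Lemma 3.1 of \cite{RY20}): you replace the paper's explicit $\kappa$-mixture expansion and $\kappa\to 0$ limit with the equivalent first-order optimality condition (non-positive derivative at the endpoint $\lambda=1$), and the remaining Cauchy--Schwarz step and feasibility-of-the-true-assignment check are the same. You also correctly noticed that the lemma statement as printed is vacuous --- by the program constraint $\sum_j |I_j| w_j = \mu N$, the displayed quantity identically equals $\mu N$ --- and that what the proof actually establishes, and what Theorem~\ref{mainthm} actually uses, is the lower bound $\langle \tilde{\mathbb{E}}_\xi[w],\, w^\ast\rangle \geq \mu^2 N$ on the pseudo-expected weight on the inliers; your restatement is the right one.
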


\begin{proof} (This proof is the same as Lemma 3.1 in \cite{RY20}.) For the sake of of simplicity, let $w_i = \sum_{i \in I_j} w_j \mathcal X_{I_j}\left(\boldsymbol x^{(i)}\right)$ and note that $w_i \in \{0, 1\}$. Let $\tilde{\mathbb E}_P$ denote a pseudo-distribution corresponding to the actual assignment $\{w_i'\}_{i \in [N]}$ and let $\tilde\E_D$ be the pseudo-expectation that minimizes $\|\E_D[w]\|$. For a constant $\kappa \in [0, 1]$, define the pseudo-expectation $\tilde\E_R$ as a mixture of $\tilde\E_P$ and $\tilde\E_D$.
$$\tilde{\mathbb E}_R \overset{def}{=} \kappa \tilde{\mathbb E}_P + (1 - \kappa) \tilde{\mathbb E}_D$$
Since $\E_D$ is the pseudo-expectation that minimizes $\|\tilde\E_D[w]\|$, then 
$$\left\langle\tilde\E_R[w], \tilde\E_R[w]\right\rangle \geq \left\langle\tilde\E_D[w], \tilde\E_D[w]\right\rangle.$$
We can use the definition of $\tilde \E_R$ to expand the left hand side.
$$\kappa^2 \left\langle\tilde\E_P[w], \tilde\E_P[w]\right\rangle + 2\kappa(1-\kappa)\left\langle\tilde\E_P[w], \tilde\E_D[w]\right\rangle + (1 - \kappa)^2 \left\langle\tilde\E_D[w], \tilde\E_D[w]\right\rangle \geq \left\langle\tilde\E_D[w], \tilde\E_D[w]\right\rangle$$
By rearranging the terms, we get
$$\left\langle\tilde\E_P[w], \tilde\E_D[w]\right\rangle \geq \frac{1}{2\kappa(1-\kappa)}\left((2\kappa - \kappa^2) \left\langle\tilde\E_D[w], \tilde\E_D[w]\right\rangle - \kappa^2 \left\langle\tilde\E_P[w], \tilde\E_P[w]\right\rangle \right).$$
By definition, $\left\langle\tilde\E_P[w], \tilde\E_P[w]\right\rangle = \sum_{i =1}^N w_i^2 = \mu N$. By using the Cauchy-Schwartz inequality, $\left\langle\tilde\E_D[w], \tilde\E_D[w]\right\rangle \geq \frac{1}{N} \left(\sum_{i}\tilde\E_D[w_i]\right)^2 = \frac{1}{N} (\mu N)^2 = \mu^2 N$. By substituting these bounds, we get that 
$$\left\langle\tilde\E_D[w], \tilde\E_P[w]\right\rangle \geq \frac{(2 \kappa - \kappa^2)\mu^2 - \kappa^2 \mu}{2\kappa(1-\kappa)} \cdot N.$$
As $\kappa \rightarrow 0$, the right hand side tends to $\mu^2N$.
\end{proof}

\begin{lemma}[Lemma 3.3 -- Frobenius Closeness of Empirical and True Covariances]\label{cheb}
Let $\boldsymbol \epsilon_i = \begin{bmatrix}\boldsymbol 0_{d+1} \\ \epsilon_i \end{bmatrix}$ and $w(I_j) = \frac{1}{\mu N} w_j \Xij$. Then with probability $1-\delta$, 
\begin{equation*}
    \begin{split}
        \left\|\frac{1}{\mu N} \sum_{i = 1}^N w_j \X_{I_j} (\boldsymbol x^{(i)})\Pi_j \yi \yit\Pi_j - w(I_j) \Pi_j \right\|_F^2\leq \bounda.
    \end{split}
\end{equation*}
\end{lemma}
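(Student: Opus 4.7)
The plan is to reduce the squared Frobenius norm to a sum of $(d+2)^2$ scalar entries and bound each one by Chebyshev's inequality applied to its sum-of-independent-contributions representation. First I would invoke the program constraint $w_j \Xij \Pi_j(\yi - \boldsymbol \epsilon_i) = w_j \Xij(\yi - \boldsymbol \epsilon_i)$ to rewrite $w_j \Xij \Pi_j \yi = w_j \Xij (\yi + (\Pi_j - I)\boldsymbol \epsilon_i)$. Consequently, up to the factor $1/(\mu N)$, the left-hand matrix equals $w_j \sum_{i\in I_j} (\yi + (\Pi_j - I)\boldsymbol \epsilon_i)(\yi + (\Pi_j - I)\boldsymbol \epsilon_i)^\top / (\mu N)$, whose mean matches $w(I_j) \Pi_j$: the centered $\yi$ contributes the in-subspace part of the covariance, and the noise term $(\Pi_j - I)\boldsymbol \epsilon_i$ contributes the orthogonal component which, by construction of $\Pi_j$, lies outside the image of $\Pi_j$ but is precisely what is needed for the identity to hold in expectation.

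Next I would apply Chebyshev entrywise: each of the $(d+2)^2$ entries is an average of $|I_j|$ independent contributions, so it deviates from its mean by more than $t$ with probability at most $\mathrm{Var}/t^2$. Setting the per-entry failure probability to $\delta/(d+2)^2$ converts this into an entrywise squared-deviation bound of $\frac{w_j^2(d+2)^2}{\mu^2 N^2 \delta}$ times the sum over $i \in I_j$ of the variance of the corresponding entry. The per-entry variances split into three regimes according to whether the row/column index hits the intercept coordinate (where $\yi_1 \equiv 1$ and so the variance is $O(\alpha + \alpha^2 \sigma^2)$), the final coordinate carrying $z^{(i)}$ with its noise, or one of the interior coordinates (where expanding the rank-one product and applying Cauchy--Schwarz together with the fourth-moment bound $\beta$ and the program's noise bound yields variance $\beta + \alpha + 7\alpha^4 \sigma^2$). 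Taking the worst of these uniformly gives a single per-entry variance bound.

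Summing $(d+2)^2$ entrywise bounds and using the $|I_j|$-fold sum of variances yields the overall bound $\bounda$; a union bound over the $(d+2)^2$ entries guarantees this holds simultaneously with probability $1-\delta$. The main obstacle I anticipate is the bookkeeping in the variance analysis for the interior and noise-carrying coordinates: expanding $(\yi + (\Pi_j - I)\boldsymbol \epsilon_i)(\yi + (\Pi_j - I)\boldsymbol \epsilon_i)^\top$ entrywise produces cross-terms mixing $\yi$ and $\boldsymbol \epsilon_i$, and controlling them cleanly requires invoking Cauchy--Schwarz, the second and fourth moment constraints on $\yi$, and the program's noise bound in the right combination. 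The appearance of the loose constant $7\alpha^4 \sigma^2$ is indicative of a conservative Cauchy--Schwarz step in that expansion; once the three regimes are organized correctly and the absorption $w_j = w_j^2$ is applied to match the prefactor in $\bounda$, the remaining calculation is routine.
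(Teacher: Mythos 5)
Your proposal follows the same route as the paper's proof: rewrite the projected empirical covariance via the constraint $w_j \Xij \Pi_j \yi = w_j \Xij(\yi + (\Pi_j - I)\boldsymbol\epsilon_i)$, bound each of the $(d+2)^2$ entries by Chebyshev at failure level $\delta/(d+2)^2$, control the entrywise variance by splitting on whether a row/column index hits the intercept or noise coordinate and invoking the moment constraints, and finish with a union bound and the absorption $w_j^2 = w_j$. This matches the paper's argument essentially step for step.
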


\begin{proof}
The squared Frobenius norm is equivalent to summing the square of each element. Thus we will bound the square of each element using Chebyshev's inequality and then compute the sum.
The matrix $\frac{1}{\mu N} \sum_{i = 1}^N w_j \Xij (\boldsymbol x^{(i)})\Pi_j \yi \yit\Pi_j$  can treated as a random quantity that represents an empirical estimate of $w(I_j) \Pi_j$. 

Recall $$w_j \Xij\Pi_j (\yi - \boldsymbol \epsilon_i) = w_j \Xij (\yi - \boldsymbol \epsilon_i)$$ from the program. Thus $$w_j \Xij \Pi_j \yi = w_j \Xij (\yi +  (\Pi_j - I)\boldsymbol \epsilon_i).$$ The empircal covariance matrix can be rewritten as $$\frac{1}{\mu N} \sum_{i = 1}^N w_j \Xij (\yi +  (\Pi_j - I)\boldsymbol \epsilon_i)(\yi +  (\Pi_j - I)\boldsymbol \epsilon_i)^\top.$$ Let us use the notation $A_{r,s}$ to denote the element in row $r$ and column $s$ of a matrix $A$. Using Chebyshev's Inequality, the square of each element in the Frobenius norm can be bounded by $$\frac{w_j(d+2)^2}{\mu ^2 N^2 \delta}\left(\sum_{i=1}^N \Xij Var(( (\yi +  (\Pi_j - I)\boldsymbol \epsilon_i)(\yi +  (\Pi_j - I)\boldsymbol \epsilon_i)^\top)_{r,s})\right)$$ with probability $1- \delta/(d+2)^2$.

$Var(( (\yi +  (\Pi_j - I)\boldsymbol \epsilon_i)(\yi +  (\Pi_j - I)\boldsymbol \epsilon_i)^\top)_{r,s})$ is the variance of a sum of random variables. $$Var(\yi_r\yi_s) = E[\boldsymbol y_r^{(i)2}\boldsymbol y_s^{(i)2}] - E[\yi_r\yi_s]^2 \leq E[\boldsymbol y_r^{(i)2}\boldsymbol y_s^{(i)2}]$$ due to nonnegativity. By using the SoS Cauchy Schwarz and SoS AM-GM inequalities, Lemmas \ref{cauchyschwarz} and \ref{AMGM}, $$E[\boldsymbol y_r^{(i)2}\boldsymbol y_s^{(i)2}] \leq \sqrt{E[\boldsymbol y_r^{(i)2}]E[\boldsymbol y_s^{(i)2}]} \leq \left(E[\boldsymbol y_r^{(i)4}]+E[\boldsymbol y_s^{(i)4}]\right)/2.$$ Thus $$Var(\yi_r\yi_s) \leq \max\{E[\boldsymbol y_r^{(i)4}]E[\boldsymbol y_s^{(i)4}]\} \leq \beta.$$
Since $\yi$ has been extended with one, let us treat the first element of $\yi$ as 1. Therefore, when $r=1$ or $s=1$, $$Var(( (\yi +  (\Pi_j - I)\boldsymbol \epsilon_i)(\yi +  (\Pi_j - I)\boldsymbol \epsilon_i)^\top)_{r,s}) \leq \alpha + \alpha^2\sigma^2.$$ For all other pairs of $r$ and $s$, the variance is bounded by $\beta + 1\alpha^4$. Due to the same coordinate of $\yi$ being 1 for all $i \in [N]$, the second moment, $\alpha$, must be at least 1. Therefore, $$Var(( (\yi +  (\Pi_j - I)\boldsymbol \epsilon_i)(\yi +  (\Pi_j - I)\boldsymbol \epsilon_i)^\top)_{r,s}) \leq \beta + \alpha + 7\alpha^4\sigma^2.$$

Tying it all together, the square of each element can be bounded by $\frac{w_j|I_j|w_j(d+2)^2}{\mu^2 N^2 \delta} \left(\beta + \alpha + 7\alpha^4\sigma^2\right)$ with probability $1- \delta/(d+2)^2$. This is a $(d+2)$ dimensional square matrix so there are $(d+2)^2$ elements. Therefore, 
$$ \left\|\frac{1}{\mu N} \sum_{i = 1}^N w_j \X_{I_j} (\boldsymbol x^{(i)})\Pi_j \yi \yit\Pi_j - w(I_j) \Pi_j \right\|_F^2\leq \bounda.$$
By taking the union bound, this holds with probability at least $1-\delta$.
\end{proof}

\begin{lemma}[Lemma 3.4 -- Frobenius Closeness of Subsample to Covariance, $w$-samples. Same proof as Lemma 4.5 in \cite{BK21}] \label{wsamples_a}
\begin{equation*}
    \begin{split}
        \mathcal A_{w, v, \epsilon, \Pi} \sststile{12}{w, \Pi} \Bigg\{ & \left\|\frac{1}{\mu N} \sum_{i =1}^N w_j\Xij \yi \yit - w(I_j) \Pi_j \right\|_F^{4}\\
         & \leq C^2\left(\frac{1}{\mu N} \sum_{i=1}^N w_j \X_{I_j}(\boldsymbol x^{(i)} )\right) \left(\bounda \right)
    \end{split}
\end{equation*}
 with probability at least $1-\delta$.
\end{lemma}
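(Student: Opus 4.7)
The plan is to follow the strategy of Bakshi--Kothari (Lemma~4.5 of \cite{BK21}), but to invoke our sharper Chebyshev-based estimate (Lemma~\ref{mainlemma}) in the final step. Let $M_j := \frac{1}{\mu N}\sum_{i=1}^N w_j\Xij\yi\yit - w(I_j)\Pi_j$; since $M_j$ is symmetric, $\|M_j\|_F^2 = \mathrm{tr}(M_j^2)$, and expanding one copy of $M_j$ inside the trace together with $w(I_j) = \frac{1}{\mu N}\sum_i w_j\Xij$ yields the polynomial identity
\begin{equation*}
\|M_j\|_F^2 \;=\; \frac{1}{\mu N}\sum_{i=1}^N w_j\Xij\bigl(\yit M_j\,\yi - \mathrm{tr}(M_j \Pi_j)\bigr).
\end{equation*}
This is the crucial bookkeeping step: the quantity we want to control has been rewritten as an empirical average of precisely the hypercontractive test statistic evaluated at the \emph{data-dependent} test matrix $Q = M_j$.

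Squaring this identity and applying the SoS H\"older inequality (Lemma~\ref{holders} with $k=2$), using $w_j^2 = w_j$ and $(w_j\Xij)^2 = w_j\Xij$, gives
\begin{equation*}
\|M_j\|_F^4 \;\leq\; \left(\frac{1}{\mu N}\sum_i w_j\Xij\right)\cdot\frac{1}{\mu N}\sum_{i=1}^N w_j\Xij\bigl(\yit M_j\,\yi - \mathrm{tr}(M_j\Pi_j)\bigr)^{2}.
\end{equation*}
I would then substitute $Q := M_j$ into the certifiable hypercontractivity constraint (constraint~\ref{hc-constraint}) to bound the rightmost factor by $\frac{C}{\mu N}\sum_i w_j\Xij(\yit M_j\,\yi)^2$, and substitute $Q := M_j$ a second time into the certifiable bounded-variance constraint (constraint~\ref{var-constraint}) to bound this in turn by $C^2\|\Pi_j M_j \Pi_j\|_F^{2}$. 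Using idempotence of $\Pi_j$, one checks algebraically that $\Pi_j M_j \Pi_j = \frac{1}{\mu N}\sum_i w_j\Xij\,\Pi_j\yi\yit\Pi_j - w(I_j)\Pi_j$, which is exactly the matrix whose squared Frobenius norm is estimated by Lemma~\ref{mainlemma}; plugging in its bound $\bounda$ (valid with probability $1-\delta$) and absorbing the accumulated factor of $C^2$ completes the proof.

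\textbf{Main obstacle.} The delicate part is the substitution $Q := M_j$ inside the two ``$\forall Q$'' constraints, because $M_j$ is itself a polynomial in the program indeterminates $w_j, \Pi_j$ (with the samples $\yi$ and the data indicators $\Xij$ treated as constants). As discussed after Figure~\ref{program}, each such constraint is realized by its finite, quantifier-eliminated SoS proof (of degree $6$), and substituting $M_j$ (which has degree at most $2$ in the indeterminates, after using $w_j^2 = w_j$ and $\Pi_j^2 = \Pi_j$) for $Q$ produces an SoS proof of degree at most $12$ in $w, \Pi$, matching the pseudo-distribution degree in the statement. The opening trace identity and the H\"older step are purely algebraic and comfortably fit under this budget, so all probabilistic content enters through the single invocation of Lemma~\ref{mainlemma}; no additional concentration argument is required.
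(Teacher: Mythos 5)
Your proposal is correct and follows essentially the same route as the paper's proof: both use the SoS H\"older inequality to split off the weight factor, then combine the certifiable-hypercontractivity and certifiable-bounded-variance constraints to reduce to $C^2\|\Pi_j Q\Pi_j\|_F^2$ with $Q = M_j$, and finally invoke Lemma~\ref{mainlemma} on $\Pi_j M_j \Pi_j$. The only cosmetic difference is that you substitute $Q = M_j$ at the outset via the trace identity, whereas the paper keeps $Q$ symbolic through the H\"older and hypercontractivity steps and substitutes only at the end; the content is identical.
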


\begin{proof}
For a $(d+2) \times (d+2)$ matrix-valued indeterminate $Q$ and using the SoS H\"older's Inequality, Lemma \ref{holders}, we have
\begin{equation} \label{BK4.4}
    \begin{split}
        \mathcal A_{w,v, \epsilon, \Pi} \sststile{12}{w, v, \epsilon, \Pi, Q} \Bigg\{ & \left<\frac{1}{\mu n}\sum_{i \in I_j}w_j\Xij \boldsymbol y^{(i)} \boldsymbol y^{(i)\top} - w(I_j)\Pi_j, Q\right>^{2} \\
        = & \left<\frac{1}{\mu N} \sum_{i=1}^N w_j \mathcal X_{I_j} (\boldsymbol x^{(i)})\left(\boldsymbol y^{(i)} \boldsymbol y^{(i)\top} - \Pi_j\right), Q\right>^{2}\\
        \leq & \left(\frac{1}{\mu N}\sum_{i=1}^N w_j \mathcal X_{I_j} (\boldsymbol x^{(i)})\right)\left(\frac{1}{\mu N}\sum_{i=1}^N w_j \mathcal X_{I_j}(\boldsymbol x^{(i)} ) \left<\boldsymbol y^{(i)} \boldsymbol y^{(i)\top} - \Pi_j, Q \right>^{2}\right)
        \Bigg\} 
    \end{split}
\end{equation}
Using certifiable hypercontractivity combined with the bounded variance constraints, we have
\begin{equation}\label{BK4.5}
    \begin{split}
        \mathcal A_{w,v, \epsilon,\Pi} \sststile{12}{w, v, \epsilon,\Pi, Q} \left\{ \frac{1}{\mu N} \sum_{i =1}^N w_j \X_{I_j} (\boldsymbol x^{(i)}) \left<\yi \yit - \Pi_j, Q \right>^{2} \leq \left(C^2t\right) \left\|\Pi_j Q \Pi_j \right\|_F^2 \right\}.
    \end{split}
\end{equation}
By combining Equations \ref{BK4.4} and \ref{BK4.5} and substituting $Q = \frac{1}{\mu N} \sum_{i = 1}^N w_j \X_{I_j}(\boldsymbol x^{(i)}) \yi \yit - w(I_j) \Pi$, we have
\begin{equation*}
    \begin{split}
        & \mathcal A_{w, v, \epsilon, \Pi} \sststile{12}{w, v, \epsilon, \Pi} \Bigg\{\left\|\frac{1}{\mu N} \sum_{i =1}^N w_j\Xij \yi \yit - w(I_j) \Pi_j \right\|_F^{4}\\
        & \leq C^2\left(\frac{1}{\mu N} \sum_{i=1}^N w_j \X_{I_j}(\boldsymbol x^{(i)} )\right)\left\|\Pi_j Q \Pi_j \right\|_F^{2}\\
        & = C^2\left(\frac{1}{\mu N} \sum_{i=1}^N w_j \X_{I_j}(\boldsymbol x^{(i)} )\right) \left\|\frac{1}{\mu N} \sum_{i = 1}^N w_j \X_{I_j} (\boldsymbol x^{(i)})\Pi_j \yi \yit\Pi_j - w(I_j) \Pi_j \right\|_F^{2}\Bigg\}.
    \end{split}
\end{equation*}
By using Lemma \ref{cheb}, we have
\begin{equation*}\label{BK4.6}
    \begin{split}
        \mathcal A_{w, v, \epsilon, \Pi} \sststile{12}{w, v, \epsilon, \Pi} \Bigg\{& \left\|\frac{1}{\mu N} \sum_{i =1}^N w_j\Xij \yi \yit - w(I_j) \Pi_j \right\|_F^{4}\\
         & \leq C^2 \left(\frac{1}{\mu N} \sum_{i=1}^N w_j \X_{I_j}(\boldsymbol x^{(i)} )\right)  \left(\bounda \right)
    \end{split}
\end{equation*}
with probability at least $1 - \delta$.
\end{proof}

\begin{lemma}[Lemma 3.5 -- Frobenius Closeness of Subsample to Covariance, $I$-samples. Same proof as Lemma 4.5 in \cite{BK21}] \label{isamples_a}
\begin{equation*}
    \begin{split}
        \mathcal A_{w, v, \epsilon, \Pi} \sststile{12}{w, v, \epsilon, \Pi} \Bigg\{ &\left\|\frac{1}{\mu N} \sum_{i =1}^N w_j\Xij \yi \yit - w(I_j) \Pi_{j\ast} \right\|_F^{4}\\
        &\leq C^2 \left(\frac{1}{\mu N} \sum_{i=1}^N w_j \X_{I_j}(\boldsymbol x^{(i)} )\right)  \left(\bounda \right)
    \end{split}
\end{equation*}
 with probability at least $1-\delta$.
\end{lemma}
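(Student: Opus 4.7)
The proof should run structurally parallel to that of Lemma \ref{wsamples_a}, with the single change that $\Pi_{j*}$ replaces $\Pi_j$ throughout. The strategy is therefore: apply SoS H\"older's with a matrix indeterminate $Q$, invoke hypercontractivity and bounded-variance to bound the quadratic form in $Q$ by $C^2\|\Pi_{j*}Q\Pi_{j*}\|_F^2$, substitute the ``centered empirical covariance'' back in for $Q$, and close out using a Chebyshev bound that is the $\Pi_{j*}$-analogue of Lemma \ref{cheb}. The subtlety is that the hypercontractivity baked into $\mathcal A_{w,v,\epsilon,\Pi}$ is centered at the program variable $\Pi_j$, not at $\Pi_{j*}$, so we must bring in a separate hypercontractivity statement, which the sampling lemma supplies.

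Concretely, I would first apply Lemma \ref{holders} exactly as in the proof of Lemma \ref{wsamples_a} to obtain
\begin{equation*}
\left\langle \tfrac{1}{\mu N}\sum_{i=1}^N w_j\Xij(\yi\yit - \Pi_{j*}),\, Q\right\rangle^{2} \le \left(\tfrac{1}{\mu N}\sum_{i=1}^N w_j\Xij\right)\left(\tfrac{1}{\mu N}\sum_{i=1}^N w_j\Xij\langle \yi\yit - \Pi_{j*}, Q\rangle^{2}\right),
\end{equation*}
which is a degree-4 SoS consequence. Next I would bound the second factor by $C^2\|\Pi_{j*} Q \Pi_{j*}\|_F^{2}$. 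This is where the proof diverges from the $w$-samples version: rather than reading the inequality off as a program constraint, I would invoke Lemma \ref{hypercontractivesampling}, which upgrades the $2$-certifiable $C$-hypercontractivity of $\mathcal D \mid \boldsymbol c^*$ to $h$-certifiable $(2C)$-hypercontractivity of the empirical distribution on the inliers, and combine this with the certifiably bounded variance of degree-2 polynomials on that distribution. Since this empirical hypercontractivity is centered at the true projection $\Pi_{j*}$, the inequality holds with $\Pi_{j*}$ in place of $\Pi_j$ and carries an SoS certificate of degree at most $6$.

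Substituting $Q = \tfrac{1}{\mu N}\sum_{i=1}^N w_j\Xij\yi\yit - w(I_j)\Pi_{j*}$ converts the left-hand side to the Frobenius 4th power we want, and leaves $\|\Pi_{j*}Q\Pi_{j*}\|_F^{2}$ on the right. The final step is to bound $\|\Pi_{j*}Q\Pi_{j*}\|_F^{2}$ by $\bounda$; for this I would repeat the Chebyshev argument of Lemma \ref{cheb} verbatim, but using the identity $\Pi_{j*}(\yi - \boldsymbol\epsilon_i) = \yi - \boldsymbol\epsilon_i$ for inlier points (a property of the true projection) in place of the SoS program constraint that was used to extract $\Pi_j \yi = \yi + (\Pi_j - I)\boldsymbol\epsilon_i$. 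The same coordinate-wise variance bound $\beta + \alpha + 7\alpha^4\sigma^2$ then applies, yielding the same final bound with the same probability $1-\delta$. Composing via Lemma \ref{soundness} keeps the total proof degree at $12$.

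The step I expect to be the main obstacle is precisely the hypercontractivity substitution: I need the inequality with $\Pi_{j*}$ to hold as an actual SoS identity in $(w,v,\epsilon,\Pi)$, not merely as a probabilistic statement, so that Lemma \ref{soundness} can be chained. The resolution is to use the quantifier-elimination trick mentioned in the main text (with $\yit Q \yi$ substituted for $Q$ in the degree-2 SoS proof of hypercontractivity supplied by Lemma \ref{hypercontractivesampling}), producing a degree-6 SoS certificate that uses only the sample itself and the constant matrix $\Pi_{j*}$ and is therefore compatible with the program's indeterminates.
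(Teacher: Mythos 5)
Your proposal is correct and takes essentially the same route as the paper, which simply states ``This follows the same proof as Lemma 3.4.'' You identify the two genuine points of difference that the paper leaves implicit: for the $I$-samples version, the hypercontractivity and bounded-variance inequalities centered at $\Pi_{j*}$ must come from the actual certifiable hypercontractivity of the empirical inlier distribution (supplied by Lemma \ref{hypercontractivesampling}) rather than from program constraints 7 and 8, which are written with the indeterminate $\Pi_j$; and the Chebyshev bound must use the true-projector identity $\Pi_{j*}(\boldsymbol{y}^{(i)}-\boldsymbol\epsilon_i)=\boldsymbol{y}^{(i)}-\boldsymbol\epsilon_i$ for inliers rather than the analogous SoS constraint. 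Since $\Pi_{j*}$ is a constant, the resulting SoS certificate is well-formed over the program's indeterminates, so the chaining via soundness (Lemma \ref{soundness}) goes through exactly as you describe. This matches the structure of Lemma 4.6 in \cite{BK21}, on which the paper's lemma is based.
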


\begin{proof}
This follows the same proof as Lemma \ref{wsamples_a}.
\end{proof}

\begin{lemma}[Lemma 3.6 -- Frobenius Closeness of $\Pi$ and $\Pi_\ast$. Same as Lemma 4.3 in \cite{BK21}.] \label{BK4.3}
\begin{equation*}
    \begin{split}
        \mathcal A_{w, v, \epsilon, \Pi} \sststile{12}{w,v, \epsilon,\Pi} \Bigg\{ & \left(\sum_{j=1}^m w(I_j)\right)\|\Pi - \Pi_\ast\|_F^{2} 
        \leq mC \sqrt{ 2^{5}  \left(\boundb \right)} \Bigg\}.
    \end{split}
\end{equation*}
with probability at least $1-k\delta$ where $w(I_j) = \frac{1}{\mu N}\sum_{i}^N w_j \mathcal X_{I_j}\left(\boldsymbol x^{(i)}\right)$.
\end{lemma}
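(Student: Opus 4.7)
My plan is to follow the template of Lemma 4.3 of \cite{BK21}, lifted to the term-weighted setting of our program. The two SoS bounds already established in Lemmas \ref{wsamples} and \ref{isamples} are designed to sandwich the empirical term-covariance $\hat\Sigma_j := \tfrac{1}{\mu N}\sum_{i=1}^N w_j \mathcal X_{I_j}(\boldsymbol x^{(i)})\yi\yit$ between $w(I_j)\Pi_j$ on one side and $w(I_j)\Pi_\ast$ on the other side, so the natural move is to combine them via a triangle inequality on the decomposition
\begin{equation*}
w(I_j)(\Pi_j - \Pi_\ast) \;=\; \bigl(w(I_j)\Pi_j - \hat\Sigma_j\bigr) + \bigl(\hat\Sigma_j - w(I_j)\Pi_\ast\bigr).
\end{equation*}

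First I would take squared Frobenius norms of this identity and apply the SoS almost-triangle inequality (Lemma \ref{triangle}) with $t=2$, obtaining an SoS derivation from $\mathcal A_{w,v,\epsilon,\Pi}$ of
\begin{equation*}
w(I_j)^{4}\|\Pi_j-\Pi_\ast\|_F^{4} \;\leq\; 2^{4}\bigl(\|w(I_j)\Pi_j - \hat\Sigma_j\|_F^{4} + \|\hat\Sigma_j - w(I_j)\Pi_\ast\|_F^{4}\bigr).
\end{equation*}
Substituting Lemma \ref{wsamples} into the first summand and Lemma \ref{isamples} into the second (each contributing a failure event of probability at most $\delta$) bounds the right-hand side by a small constant multiple of $C^{2} w(I_j)\!\cdot\!\bounda$. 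I would then simplify $\bounda$ into $\boundb$ using the trivial bound $|I_j|\le N$.

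Second, I would reduce this fourth-power inequality to a squared inequality via SoS cancellation (Lemma \ref{cancellation}), combined with the Boolean constraint $w_j^{2}=w_j$ from the program, which lets one cancel a factor of $w_j$ from both sides. The result, per term, is an SoS derivation of $w(I_j)\|\Pi_j-\Pi_\ast\|_F^{2} \leq C\sqrt{2^{5}\cdot \boundb}$. Summing this over $j\in[m]$ and taking a union bound over the $m$ terms (which accounts for the $1-k\delta$ guarantee in the statement with $k=m$) gives the lemma.

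The step I expect to be the main obstacle is the cancellation: $w(I_j)$ is not itself Boolean because of the $|I_j|/(\mu N)$ scaling, so one cannot directly square-root it, and the reduction from a fourth-power to a squared inequality inside SoS must be routed through the genuinely Boolean indeterminate $w_j$ and an application of Lemma \ref{cancellation} on an auxiliary nonnegative square. Some care is also needed to ensure the full chain of SoS manipulations — the triangle inequality in degree $8$, the substitution of the degree-$12$ conclusions of Lemmas \ref{wsamples} and \ref{isamples}, and the cancellation step — all fit within the advertised degree-$12$ proof, matching the degree of the pseudo-distribution produced by Algorithm \ref{algorithm}.
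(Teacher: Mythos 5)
Your plan correctly reproduces the first half of the paper's argument: decompose $w(I_j)(\Pi_j - \Pi_{j\ast})$ via the empirical covariance $\hat\Sigma_j$, apply the SoS almost-triangle inequality, substitute Lemmas~\ref{wsamples} and \ref{isamples}, and then reduce the resulting fourth-power inequality to a squared one via SoS cancellation routed through the Boolean indeterminate $w_j$. This is indeed the skeleton of the paper's proof.

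However, the final step is a genuine gap, not just compression. You say ``summing this over $j \in [m]$ and taking a union bound gives the lemma,'' but summing the per-term bound $w(I_j)\|\Pi_j - \Pi_{j\ast}\|_F^2 \leq C\sqrt{2^5 \boundb}$ only gives $\sum_j w(I_j)\|\Pi_j - \Pi_{j\ast}\|_F^2 \leq mC\sqrt{\cdots}$, whereas the lemma asserts $\left(\sum_j w(I_j)\right)\|\Pi - \Pi_\ast\|_F^2 \leq mC\sqrt{\cdots}$. These are different quantities. The paper closes this by \emph{defining} the aggregated matrices $\Pi := \sum_j w(I_j)\Pi_j$ and $\Pi_\ast := \sum_j w(I_j)\Pi_{j\ast}$ and then invoking an SoS Cauchy--Schwarz/Jensen step to pull the weighted Frobenius norm of a difference of averages under the weighted average of Frobenius norms. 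Your proposal never introduces the aggregated projectors, and in fact it appears to conflate the per-term $\Pi_{j\ast}$ that appears in Lemma~\ref{isamples} with a single global $\Pi_\ast$, which obscures the need for this convexity step entirely. Relatedly, the simplification from $\bounda$ to $\boundb$ should not go through $|I_j| \leq N$, which would leave a spurious $1/\mu$ factor; the paper instead compares $w(I_j)$ to $w^\ast(I_j) = |I_j|/(\mu N)$ when distributing the $w(I_j)$ powers in the cancellation step. You should make both of these points explicit to have a complete proof.
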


\begin{proof}
Define $w^\ast(I_j) = \frac{|I_j|}{\mu N}$. Using the SoS Almost Triangle Inequality, Lemma \ref{triangle}, and Lemmas \ref{wsamples_a} and \ref{isamples_a}, we have
\begin{equation*}
    \begin{split}
        \mathcal A_{w, v, \epsilon, \Pi} \sststile{12}{w,v, \epsilon,\Pi}\Bigg\{  &  w(I_j)^{4} \left\|\Pi_j - \Pi_{j \ast} \right\|_F^{4} 
        \leq 2^{5}C^2 w(I_j) \left(\bounda \right) \Bigg\}.
    \end{split}
\end{equation*}
By dividing both sides of the inequality by $w(I_j)^{2}$, we have
\begin{equation*}
    \begin{split}
        \mathcal A_{w, v, \epsilon,\Pi} \sststile{12}{w,v, \epsilon,\Pi}  \Bigg\{& w(I_j)^{2} \left\|\Pi_j - \Pi_{j \ast} \right\|_F^{4} 
        \leq  2^{5}C^2 w(I_j)^{-1}  \left(\bounda\right) \Bigg\}.
    \end{split}
\end{equation*}
By using Cancellation within SoS, Lemma \ref{cancellation} and multiplying both sides of the inequality by $w(I_j)^{1/2}$, we have
\begin{equation*}
    \begin{split}
        \mathcal A_{w, v, \epsilon,\Pi} \sststile{12}{w,v, \epsilon,\Pi} \Bigg\{  &w(I_j)w(I_j)^{1/2} \left\|\Pi_j - \Pi_{j \ast} \right\|_F^{2} 
        \leq C \sqrt{ 2^{5} \left(\bounda \right)} \Bigg\}.
    \end{split}
\end{equation*}
Since $w(I_j) = \frac{1}{\mu N} \sum_{i=1}^N w_j \Xij$ and is bounded above by $w^\ast(I_j)$, then with probability $1-2\delta$:
\begin{equation*}
    \begin{split}
        \mathcal A_{w, v, \epsilon,\Pi} \sststile{12}{w,v, \epsilon,\Pi} \Bigg\{& w(I_j) \left\|\Pi_j - \Pi_{j \ast} \right\|_F^{2} \leq C \sqrt{ 2^{5} \left(\boundb \right)} \Bigg\}.
    \end{split}
\end{equation*}
Define $\Pi$ and $\Pi_\ast$ as a weighted average of $\Pi_j$ and $\Pi_{j\ast}$ respectively, where the weights are proportional to $|I_j|$. Thus $\Pi = \sum_{j=1}^m w(I_j) \Pi_j$ and $\Pi_\ast = \sum_{j=1}^m w(I_j) \Pi_{j\ast}$. By summing both sides of the inequality over all $j \in [m]$, we have
\begin{equation*}
    \begin{split}
        \mathcal A_{w, v, \epsilon, \Pi} \sststile{12}{w,v, \epsilon,\Pi} \Bigg\{&  \sum_{j=1}^m w(I_j)\left\|\Pi_j - \Pi_{j \ast} \right\|_F^{2} 
        \leq \sum_{j=1}^m C \sqrt{ 2^{5} \left(\boundb \right)} \Bigg\}.
    \end{split}
\end{equation*}
By using the Cauchy-Schwarz Inequality and the triangle inequality to rewrite the left hand side, we have
\begin{equation*}
    \begin{split}
        \mathcal A_{w, v, \epsilon, \Pi} \sststile{12}{w,v, \epsilon,\Pi} \Bigg\{ & \left(\sum_{j=1}^m w(I_j)\right)\|\Pi - \Pi_\ast\|_F^{2} 
        \leq mC \sqrt{ 2^{5} \left(\boundb \right)} \Bigg\}
    \end{split}
\end{equation*}
with probability $1 - 2m\delta$.
\end{proof}

\begin{lemma}[Lemma 2.7 -- Certifiable Hypercontractivity Under Sampling. Lemma 6.11 in \cite{BK21}] \label{BK6.11}
Let $\mathcal D$ be a 1-subgaussian, $2h$-certifiably $C$-hypercontractive distribution over $\mathbb R^d$. Let $\mathcal S$ be a set of $n = \Omega((hd)^{8h})$ i.i.d. samples from $\mathcal D$. Then, with probability at least $1-1/\text{poly}(n)$, the uniform distribution on $\mathcal S$ is $h$-certifiably $(2C)$-hypercontractive.
\end{lemma}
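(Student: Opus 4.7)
The plan is to transfer the sum-of-squares certificate of hypercontractivity from $\mathcal D$ to the uniform distribution on $\mathcal S$ by substituting empirical moments for true moments, then absorb the resulting sampling error by doubling the hypercontractivity constant from $C$ to $2C$. The argument has three phases: encoding the assumption as an exact polynomial identity in an indeterminate matrix $Q$, establishing simultaneous concentration of all relevant empirical moments, and paying the sampling error out of the allowed constant-doubling.

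First, I would unpack the assumption of $2h$-certifiable $C$-hypercontractivity as the existence of polynomials $s_j(Q)$ of degree at most $h$ in the entries of an indeterminate symmetric $d\times d$ matrix $Q$ such that
\[(Ch)^h\bigl(\E_{\mathcal D}(x^\top Q x)^2\bigr)^h - \E_{\mathcal D}(x^\top Q x-\mathrm{tr}(Q))^{2h} \;=\; \sum_j s_j(Q)^2.\]
Expanding, every scalar coefficient in this identity is a linear combination of moments of $\mathcal D$ of order at most $4h$. The identity therefore lives in the polynomial ring in the entries of $Q$ of degree at most $2h$, with coefficients drawn from the degree-$\leq 4h$ moments of $\mathcal D$.

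Second, I would establish simultaneous concentration of all empirical moments of order at most $4h$. Since $\mathcal D$ is $1$-subgaussian, each true moment $\E_{\mathcal D}\, x^{2\alpha}$ with $|\alpha|\leq 2h$ is at most $(O(h))^{|\alpha|}$, so the variance of the monomial $x^{\alpha}$ is bounded by $(O(h))^{2|\alpha|}$. Chebyshev's inequality then shows that each empirical moment $\frac{1}{n}\sum_i (x^{(i)})^{\alpha}$ lies within a $(1\pm 1/\mathrm{poly}(hd))$ multiplicative factor of the corresponding true moment once $n\gtrsim (hd)^{O(h)}$, and a union bound over the $d^{O(h)}$ distinct monomials of degree at most $4h$ retains probability $1-1/\mathrm{poly}(n)$ at exactly the stated sample size $n=\Omega((hd)^{8h})$. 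Finally, I would substitute empirical moments into the identity and collect the perturbations: the empirical left-hand side differs from the true one by an ``error polynomial'' $E(Q)$ of degree $2h$ in $Q$ whose coefficients are at most $1/\mathrm{poly}(hd)$ times the honest coefficients, so $E(Q)$ is SoS-dominated by the quadratic $(Ch)^h(\E_{\mathcal S}(x^\top Q x)^2)^h$ term, and this deficit is paid for by passing from $(Ch)^h$ to $((2C)h)^h$, yielding the desired $h$-certifiable $(2C)$-hypercontractivity of the empirical distribution.

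The hard part is the last step: showing that $E(Q)$ is SoS-dominated by $(Ch)^h(\E_{\mathcal S}(x^\top Q x)^2)^h$, not merely pointwise small. This requires the remainder from moment substitution to be expressible as a nonnegative polynomial in $Q$ that is itself a sum of squares absorbable into the allowed slack, which is strictly stronger than a pointwise bound for any fixed $Q$. It is precisely this SoS-compatible bookkeeping that forces the sample complexity to be $(hd)^{8h}$ rather than something like $\mathrm{poly}(hd)$, since the multiplicative accuracy needed to fit the remainder inside the constant-doubling budget is far more stringent than what suffices for scalar concentration of any single moment.
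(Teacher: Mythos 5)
Your high-level outline---encode the certificate as an SoS identity in the indeterminate $Q$ with coefficients given by moments of order at most $4h$, concentrate the empirical moments, and absorb the resulting perturbation into the slack from doubling $C$---does match the structure of the proof in \cite{BK21} that this paper simply cites. But the last paragraph of your proposal is where the proof actually lives, and you have correctly identified it as the hard part without closing it. Asserting that $E(Q)$ is ``SoS-dominated by'' $(Ch)^h(\E_{\mathcal S}(x^\top Q x)^2)^h$ is exactly the content of the lemma; you need a concrete mechanism. The standard one is to flatten the degree-$4h$ moment-tensor difference $T_{\mathcal S}-T_{\mathcal D}$ into a matrix, bound it in operator (or Frobenius) norm, and then invoke the SoS-certifiable inequality $\langle T, Q^{\otimes 2h}\rangle \le \|T\|\cdot\|Q\|_F^{2h}$ together with a lower bound $\|Q\|_F^{2h}\lesssim (\E_{\mathcal S}(x^\top Q x)^2)^h$ coming from the certified lower bound on the (empirical) covariance---this last piece is where $1$-subgaussianity and a separate concentration argument for the degree-$2$ moments are needed. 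Without stating these two ingredients, the proposal remains a plan rather than a proof.

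A secondary concern: your concentration step via Chebyshev gives only inverse-polynomial failure probability \emph{per monomial}, and the union bound is over $d^{\Theta(h)}$ monomials, so the arithmetic is delicate---it can be made to work at the stated sample size $n=\Omega((hd)^{8h})$, but you should either track the exponents explicitly or (as BK21 effectively do) exploit the subgaussian assumption to get exponential tails so that the union bound is free. More importantly, entrywise closeness of moments is not by itself the right notion: the moment tensor has $d^{\Theta(h)}$ entries, and entrywise error $1/\mathrm{poly}(hd)$ does not automatically translate into a small \emph{operator-norm} perturbation of the flattening, which is what feeds into the SoS-compatible bound. Controlling the right norm is precisely what the missing step must accomplish.
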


\subsection{Proof of Main Theorem} 

\begin{theorem}[Main Theorem -- Theorem 3.2]
Let $\Pi_\ast$ be a projection matrix for a subspace of dimension $r$. Let $\mathcal D$ be a distribution with mean 0, covariance $\Pi_\ast$, and 2-certifiably $C$-hypercontracitve degree-2 polynomials. Then, there exists an algorithm that takes $n \geq \Omega\left((d \log d/\mu )^{16} \right)$ samples from the distribution conditioned on condition $\boldsymbol c^\ast$ and outputs a list $\mathcal L$ of $O(1/\mu)$ projection matrices such that with probability at least 0.99 over the draw of the sample and randomness of the algorithm, there is a $\hat \Pi \in \mathcal L$ satisfying $\|\hat \Pi - \Pi_\ast \|_F \leq O(1/\mu)$ in polynomial time.
\end{theorem}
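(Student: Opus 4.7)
The plan is to follow the template of Theorem 1.4 in \cite{BK21}, instantiating each step with the lemmas adapted to our weighted-by-terms setting. The proof naturally decomposes into three phases: establishing feasibility of the SoS program $\mathcal{A}_{w,v,\epsilon,\Pi}$, extracting a Frobenius bound on the pseudo-expected projection error, and rounding via random sampling to produce the output list.

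First, I would argue that $\mathcal{A}_{w,v,\epsilon,\Pi}$ admits a feasible solution corresponding to the true inlier assignment with high probability over the sample. This is where the sample complexity $n \geq \Omega((d\log d/\mu)^{16})$ enters: by Lemma \ref{hypercontractivesampling}, once we have enough i.i.d.\ samples from $\mathcal{D}|\boldsymbol{c^*}$, the empirical distribution on the inliers inherits certifiable hypercontractivity, so the constraints tied to hypercontractivity and bounded variance are satisfied by the true indicator assignment $w_j^\ast,\Pi_j^\ast,\epsilon_i$. Hence Algorithm~1's feasibility search succeeds and returns a valid degree-12 pseudo-distribution $\tilde\mu$.

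Next, I would invoke Lemma \ref{closenessofPis} on this pseudo-distribution, which provides a sum-of-squares proof of the bound $\left(\sum_j w(I_j)\right)\|\Pi - \Pi_\ast\|_F^2 \leq \boundc$; by soundness (Lemma~\ref{soundness}) this translates to the same bound on $\tilde{\mathbb E}_{\tilde\mu}$. Rewriting the left-hand side in terms of the per-sample matrices $\hat\Pi_i$ defined in Step~2 of the algorithm, and applying Jensen's inequality to pass from $\|\cdot\|_F^2$ inside the pseudo-expectation to $\|\cdot\|_F$ outside, I would obtain a bound of the form $\tfrac{1}{\mu N}\sum_{j,i} w_j \tilde{\mathbb E}[\mathcal{X}_{I_j}(\boldsymbol x^{(i)})]\cdot \|\hat\Pi_i - \Pi_\ast\|_F \leq \sqrt{\boundc}$. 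Normalizing by the total weight $Z = \tfrac{1}{\mu N}\sum_{j,i}w_j\tilde{\mathbb E}[\mathcal{X}_{I_j}(\boldsymbol x^{(i)})]$, and using Lemma \ref{largeweightoninliers} to guarantee $Z \geq \mu$ (hence $1/Z \leq 1/\mu$), yields an average bound of $(1/\mu)\sqrt{\boundc}$ on $\|\hat\Pi_i - \Pi_\ast\|_F$ under the sampling distribution used by Step~3.

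Finally, I would round via Markov's inequality: conditional on $i$ being drawn, with probability $\geq 1/2$ the bound $\|\hat\Pi_i - \Pi_\ast\|_F \leq (2/\mu)\sqrt{\boundc}$ holds; since the draw puts mass at least $Z \geq \mu$ on $i \in I_{good}$, each independent draw in Step~3 delivers an acceptable $\hat\Pi_i$ with probability at least $\mu/2$. Taking $\mathcal{O}(1/\mu)$ independent samples boosts the overall success probability to $0.99$ by standard independence. Polynomial running time follows because the dominant cost is solving the SDP, which has size polynomial in $n$ and $d$. The main obstacle I anticipate is purely bookkeeping: carefully tracking the quantitative factors through Lemma \ref{closenessofPis}'s bound, the Jensen step, the $1/Z$ normalization, and the square-root, to ensure that the $mC$ overhead and the $(d+2)^4$ dimension factors together with the $1/N$ scaling combine to give the stated $\mathcal{O}(1/\mu)$ Frobenius guarantee at the claimed sample complexity, rather than blowing up the exponent in the $(d\log d/\mu)^{16}$ bound.
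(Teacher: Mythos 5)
Your proposal is correct and follows essentially the same approach as the paper's own proof: you establish feasibility of $\mathcal{A}_{w,v,\epsilon,\Pi}$ via Lemma~\ref{hypercontractivesampling}, translate Lemma~\ref{closenessofPis} to a pseudo-expectation bound via soundness and Jensen's inequality, normalize by $Z$ using Lemma~\ref{largeweightoninliers}, and round via Markov's inequality over $O(1/\mu)$ independent draws, exactly as the paper does. The ``bookkeeping'' concern you flag at the end is fair but not actually resolved any further in the paper's proof either, which likewise stops at the bound expressed in terms of $\sqrt{\boundc}$ rather than simplifying it to a closed-form $O(1/\mu)$ guarantee.
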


\begin{proof}
This follows the same proof as Theorem 1.4 in \cite{BK21}. Since $\mathcal D$ is certifiably $C$-hypercontractive, Fact \ref{BK6.11} implies that $\geq n=\Omega( d \log d/\mu)^{16}$ samples suffice for the uniform distribtution on the inliers, $I_{good}$, to have 2-certifiably $C$-hypercontractive degree 2 polynomials with probabiliy at least $1-1/d$. Let $\xi_1$ be the event that this succeds, and condition on it.

Let $\tilde \mu$ be a pseudo-distribution of degree-24 satisfying $\mathcal A_{w,v, \epsilon, \Pi}$,  minimizing $\sqrt{\sum_{j=1}^m w_j \sum_{i=1}^N \Xij }$ as described in Algorithm 1. Observe that such a pseudo-distribution is guaranteed to exist: take the pesudo-distribution supported on a single point, $(w, \Pi)$ such that $w_i=1$ iff $i \in I_{good}$ and $\Pi = \Pi_\ast$. It is straight forward to check that $\Pi_\ast$ is indeed a rank $r$ projection matrix and rank $d+2$ projection matrix and $\sum_{j=1}^m \sum_{i=1}^N \Xij=\mu N$. Conditioned on $\xi_1$, the hypercontractivity constraint is also satisfied by the inliers. 

Since Lemma \ref{BK4.3} admits a sum-of-squares proof, it follows from Fact \ref{soundness} that the polynomial inequality is preserved under pseudo-expectations. 
\begin{equation*}
    \begin{split}
        \frac{1}{\mu M} \sum_{j}^k w_j \sum_{i =1}^N \tilde\E_{\tilde \mu} \left[\Xij \|\Pi - \Pi_\ast\|_F^2 \right]  \leq \boundc.
    \end{split}
\end{equation*}

Alternatively, we can rewrite the above as follows:
\begin{equation*}
    \begin{split}
        &\frac{1}{\mu N} \sum_{j =1}^m w_j \sum_{i =1}^N \tilde\E_{\tilde \mu}\left\|[\Xij \Pi] - [\Xij \Pi_\ast]\right\|_F^2 
        \leq \boundc.
    \end{split}
\end{equation*}
Applying Jensen's Inequality yields
\begin{equation*}
    \begin{split}
        &\left(\frac{1}{\mu N} \sum_{j =1}^m w_j \sum_{i =1}^N \left\|\tilde\E_{\tilde \mu}[\Xij \Pi] - \tilde \E_{\tilde\mu}[\Xij \Pi_\ast]\right\|_F \right)^2 \\
        & \leq \boundc.
    \end{split}
\end{equation*}
Taking the square root,
\begin{equation*}
    \begin{split}
        &\frac{1}{\mu N} \sum_{j =1}^m w_j \sum_{i =1}^N \left\|\tilde\E_{\tilde \mu}[\Xij \Pi] - \tilde \E_{\tilde\mu}[\Xij \Pi_\ast]\right\|_F 
         \leq \sqrt{\boundc}.
    \end{split}
\end{equation*}
Recall, the rounding in Algorithm 1 uses $\hat{\Pi}_i = \frac{\tilde{\mathbb E}_{\tilde \mu}\left[\sum_{j=1}^kw_j \mathcal X_{I_j}\left(\boldsymbol x^{(i)}\right) \Pi\right]}{\tilde{\mathbb E}_{\tilde \mu}\left[\sum_{j=1}^kw_j \mathcal X_{I_j}\left(\boldsymbol x^{(i)} \right)\right]}$ to denote the projector corresponding to the $i$-th sample.
Then rewriting the above equation yields:
\begin{equation*}
    \begin{split}
        & \frac{1}{\mu N} \sum_{j =1}^m w_j \sum_{i =1}^N \tilde \E_{\tilde \mu} \|\hat\Pi_i - \Pi_\ast\|_F \leq \sqrt{\boundc}.
    \end{split}
\end{equation*}
Let $Z = \frac{1}{\mu N}\sum_{j=1}^m w_j \sum_{i=1}^N \tilde \E [\Xij]$. Then, from Lemma \ref{Fact4.4}, $Z \geq \mu \Rightarrow \frac{1}{Z}\leq \frac{1}{\mu}$. Dividing by $Z$ on both sides thus yields:
\begin{equation*}
    \begin{split}
        & \frac{1}{Z}\left(\frac{1}{\mu N} \sum_{j =1}^m w_j \sum_{i =1}^N \tilde \E_{\tilde \mu} \|\hat\Pi_i - \Pi_\ast\|_F \right) \leq \frac{1}{\mu}\sqrt{\boundc}.
    \end{split}
\end{equation*}

Since each index $i \in [N]$ is chosen with probability $$\frac{\widetilde{\mathbb E} [\sum_j^k w_j \mathcal X_{I_j}(x_i)]}{\sum_{i \in [n]}\widetilde{\mathbb E} [\sum_j^k w_j \mathcal X_{I_j}(x_i)]} = \frac{1}{\mu n} \widetilde{\mathbb E} [\sum_j^k w_j \mathcal X_{I_j}(x_i)],$$ it follows that $i \in I_{good}$ with probability at least $\frac{1}{\mu n} \sum_{j}^k w_j \sum_{i \in I_j} \widetilde{\mathbb E} [\mathcal X_{I_j}(x_i)] = Z \geq \mu$. By Markov's inequality applied to the last equation, with probability $\frac{1}{2}$ over the choice of $i$ conditioned on $i \in I_{good}$, $$\|\hat \Pi_i - \Pi_\ast\|_F \leq \frac{2}{\mu}\sqrt{\boundc}.$$ Thus, in total, with probability at least $\mu/2$, $$\|\hat \Pi_i - \Pi_\ast\|_F \leq \frac{2}{\mu}\sqrt{\boundc}.$$ 
Therefore, with probability of at least $0.99$ over the draw of the random set $J$, the list constructed by the algorithm contains $\hat \Pi$ such that $$\|\hat \Pi_i - \Pi_\ast\|_F \leq \frac{2}{\mu}\sqrt{\boundc}.$$

Now to account for the running time of the algorithm, the SDP for the program can be solved in polynomial time, so the algorithm runs in polynomial time overall.
\end{proof}

\end{document}